\documentclass[letterpaper, 10 pt, conference]{ieeeconf}  
\usepackage{algorithmicx}
\usepackage[ruled,linesnumbered]{algorithm2e}
\usepackage{amssymb}
\usepackage{amsmath}
\usepackage{bm}
\usepackage{booktabs}
\usepackage{color}
\usepackage{cite}
\usepackage{graphicx}
\usepackage[hidelinks]{hyperref}
\usepackage{multirow}
 
\usepackage{caption}
\captionsetup{font={footnotesize}}
\usepackage{subcaption}
\captionsetup[subfigure]{justification=centering}
\newtheorem{theorem}{Theorem}
\usepackage{threeparttable}
\usepackage{times}
      
\IEEEoverridecommandlockouts                              %
\overrideIEEEmargins                                      %

\title{\LARGE \bf
VLSA: Vision-Language-Action Models with Plug-and-Play Safety Constraint Layer
}


\author{Songqiao Hu$^{1,2,*}$, Zeyi Liu$^{1,2,*}$, Shuang Liu$^{1,2,3}$, Jun Cen$^{4}$, Zihan Meng$^{3}$, \\ Shihefeng Wang$^{1}$, Xiang Li$^{1}$, and Xiao He$^{1,2,\dagger}$
\thanks{$^{*}$Equal contribution. $^{\dagger}$Corresponding author (e-mail: {\tt\small hexiao@tsinghua.edu.cn}).}%
\thanks{This work was supported in part by the National Natural Science Foundation of China under Grants 62525308, 624B2087, 62473223, and 52172323, and in part by the Beijing Natural Science Foundation under Grant L241016.}%
\thanks{$^{1}$Department of Automation, Tsinghua University, Beijing 100084, China.}%
\thanks{$^{2}$Institute for Embodied Intelligence and Robotics, Tsinghua University, Beijing 100084, China.}%
\thanks{$^{3}$TetraBOT. $^{4}$DAMO Academy, Alibaba Group.}%
}

\begin{document}

\maketitle
\thispagestyle{empty}
\pagestyle{empty}

\begin{abstract}

Vision-Language-Action (VLA) models have demonstrated remarkable capabilities in generalizing across diverse robotic manipulation tasks. However, deploying these models in unstructured environments remains challenging due to the critical need for simultaneous task compliance and safety assurance, particularly in preventing potential collisions during physical interactions. In this work, we introduce a Vision-Language-Safe Action (VLSA) architecture, named AEGIS, which contains a plug-and-play safety constraint (SC) layer formulated via control barrier functions. AEGIS integrates directly with existing VLA models to improve safety with theoretical guarantees, while maintaining their original instruction-following performance. To evaluate the efficacy of our architecture, we construct a comprehensive safety-critical benchmark SafeLIBERO, spanning distinct manipulation scenarios characterized by varying degrees of spatial complexity and obstacle intervention. Extensive experiments demonstrate the superiority of our method over state-of-the-art baselines. Notably, AEGIS achieves over 50\% improvement in obstacle avoidance rate while substantially increasing the task success rate by nearly 10\%. 
All benchmark datasets, code, and supplementary materials are publicly available at \url{https://vlsa-aegis.github.io/}.
\end{abstract}

\section{Introduction}

Vision-Language-Action (VLA) models have demonstrated remarkable generalization capabilities across a wide range of robotic manipulation tasks by unifying visual encoding, language understanding, and action control into a single end-to-end framework \cite{zitkovich2023rt,sapkota2025vision,kawaharazuka2025vision}. As a key technological pathway linking semantic parsing with embodied control, VLA models enable robots to generate coherent physical actions from visual observations and natural language goals, representing a significant step toward general-purpose embodied intelligence \cite{song2025embodied}. Recent advances, such as $\pi_{0.5}$ \cite{intelligence2025pi}  and OpenVLA \cite{kim2024openvla}, have made substantial progress in spatial reasoning and multimodal alignment, enhancing reasoning efficiency and task execution performance.

Safety stands as a prerequisite for the real-world deployment of VLA models, as collisions in unstructured environments can lead to hardware damage, human injury, or property loss \cite{liu2023real,zhong2025survey,liu2023dynamic}. Although semantic understanding is well performed by existing VLA models, safety guarantees are often overlooked \cite{sapkota2025vision}. In recent works, safety constraints are integrated through reinforcement learning \cite{zhang2025safevla,gu2024review,brunke2022safe}. 
These approaches have demonstrated promising results in specific tasks. 
Nevertheless, these retraining-based methods require prohibitive computational costs and are further bottlenecked by the expensive, labor-intensive nature of collecting safe real-world robotic data. Furthermore, their flexibility is severely restricted when deployed with existing pretrained models \cite{hu2022lora}.
Additionally,  safety is typically handled as a soft objective through reward penalties rather than a hard constraint in such cases \cite{hasanzadezonuzy2021learning}. Consequently, robot behavior in unstructured environments remains overly reliant on the generalization capability of VLA models, and unsafe trajectories might be produced when out-of-distribution scenarios are encountered.

\begin{figure}[htbp]
  \centering
  
  \begin{subfigure}[t]{0.49\linewidth}
    \centering
    \includegraphics[width=\linewidth]{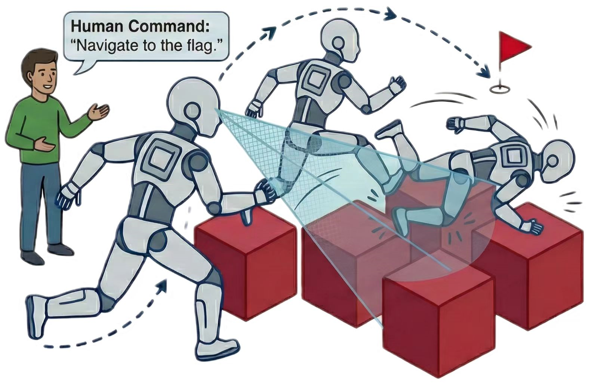}
    \caption{VLA: Task-Oriented \\ (Potential Collision)}
    \label{fig:sub1}
  \end{subfigure}
  \hfill
  \begin{subfigure}[t]{0.49\linewidth}
    \centering
    \includegraphics[width=\linewidth]{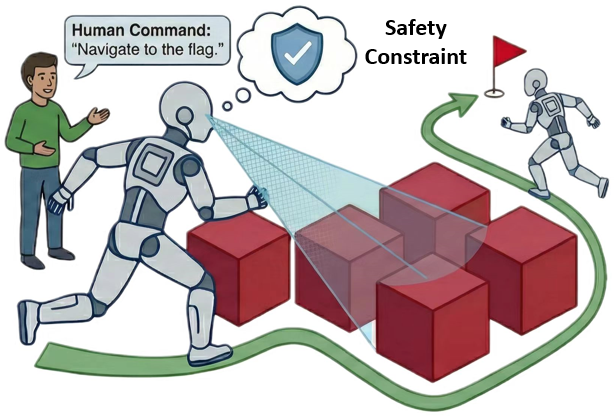}
    \caption{VLSA: Task-Oriented With Safety\\ (Collision-Free)}
    \label{fig:sub2}
  \end{subfigure}

  \caption{Illustrative Comparison of VLA and VLSA Model Behaviors.}\label{figure1}
\end{figure}
Therefore, strict physical safety must be enforced while instruction-following capabilities are maintained. However, directly integrating analytic safety filters like \textit{control barrier functions} (CBFs) into end-to-end VLA models is non-trivial due to two main challenges: the modality gap between raw visual inputs and precise geometric states, and the semantic gap between language instructions and actual collision hazards \cite{ames2019control}. Consequently, a naive integration often leads to overly conservative behaviors, such as treating necessary objects as obstacles, thereby hindering task execution.

To address these challenges, a {\it vision-language-safe action} (VLSA) architecture is introduced. As illustrated in Fig.~\ref{figure1}, conventional VLA models are designed to execute semantic instructions. In contrast, the VLSA framework introduces a  safety constraint layer that dynamically adjusts the original \textit{action} to be a \textit{safe action} while preserving the intended instruction. Following the structure of VLSA, we propose AEGIS (\textbf{A}ction \textbf{E}xecution \textbf{G}uarded by \textbf{I}nvariant \textbf{S}afety) in this paper. Specifically, AEGIS leverages the reasoning power of {\it vision-language models} (VLMs) to identify objects in the scene that may obstruct robotic motion based on natural language instructions and visual observations By incorporating open-set object detection and depth information, semantic-level risks are translated into physical-space avoidance requirements. A  CBF-based \textit{quadratic programming} (QP) solver is then constructed. During inference, nominal \textit{actions} generated by the VLA model are monitored in real-time. The SC layer activates only when potential safety violations are detected, thereby preserving the original task intent while enforcing mathematically proven strict safety guarantees and ensuring safe robotic operation. 

Main contributions are summarized as follows:
\begin{enumerate}


    \item [1)] We propose AEGIS, the first approach that integrates CBFs into VLA models to enforce explicit safety constraints. By introducing a plug-and-play SC layer, AEGIS bridges visual perception and semantic understanding with safety-guaranteed control, without the need for retraining.
    
    \item[2)] We establish SafeLIBERO, a comprehensive safety-critical benchmark derived from the LIBERO dataset, encompassing 32 diverse scenarios and 1600 episodes with various types of obstacles. It provides a  platform for evaluating the safety and robustness of VLA policies in complex environments.
    
    \item[3)] Extensive simulation studies on SafeLIBERO demonstrate that AEGIS achieves superior performance to state-of-the-art baselines, yielding a more than 50\% improvement in obstacle avoidance and a nearly 10\% gain in task success rate. In addition, real-world robotic experiments are conducted to validate the practical applicability of the proposed framework.

\end{enumerate}

\section{Related Work}
\subsection{Vision-Language-Action Models}
VLA models unify visual encoding, language understanding, and action control into an end-to-end framework, enabling robots to generate coherent physical actions from visual observations and natural-language goals \cite{zitkovich2023rt,zhong2025survey,sapkota2025vision}. They bridge the gap between semantic interpretation and embodied control, serving as a key technological pathway toward general-purpose and highly generalizable embodied intelligence. A VLA model maps vision-language inputs to actions via a pipeline of encoding, multimodal fusion, and decoding. Recent progress in VLA research has been driven by advances in spatial reasoning, multimodal alignment, large-scale pretraining, model architecture and inference efficiency \cite{li20253ds,zhou2025chatvla,ge2025vla,kim2024openvla,black2025pi,intelligence2025pi}. 

Despite these impressive capabilities in semantic understanding and task generalization, most existing VLA models overlook the critical aspect of \textit{safety}. Consequently, direct deployment of such {\textit{black-box}} policies in real-world environments remains risky, as they may generate erratic or unsafe trajectories when facing out-of-distribution scenarios \cite{neary2025improving}. To address the issue, recent work SafeVLA \cite{zhang2025safevla} has attempted to incorporate safety considerations by integrating constraints into the training process via reinforcement learning. However, such retraining-based methods require high computational costs and are difficult to apply directly to existing VLA models. Furthermore, relying solely on reinforcement learning optimization typically treats safety as a soft objective (i.e., reward penalties) rather than a hard constraint \cite{hasanzadezonuzy2021learning}. While effectively improving safety levels, these approaches lack explicit, analytic mechanisms to enforce boundary conditions during inference, leaving the robot behavior dependent on probabilistic model outputs rather than grounded physical constraints. Therefore, it is essential to explore a training-free framework that enforces explicit safety constraints during VLA inference.

\subsection{Safety-Guaranteed Control}
Traditional approaches to robot safety typically rely on motion planning algorithms such as RRT* \cite{karaman2011sampling}, or reactive methods like \textit{artificial potential fields} (APF) \cite{khatib1986real}. While effective in classical settings, these methods are unsuitable for VLA models, as overriding the end-to-end semantic actions with a global planner discards the model's intent. Furthermore, these conventional methods often lack rigorous theoretical safety guarantees \cite{singletary2021comparative}. In contrast, CBFs  have emerged as a preferred solution for robot control \cite{xiao2023barriernet}. Acting as a safety filter and typically formulated as a optimization problem, CBFs minimally adjust the nominal action of robots to ensure the forward invariance of a safe set \cite{knoedler2025safety}. The mechanism strictly prevents collisions while preserving the original task behavior to the maximum extent possible.

However, as introduced earlier, integrating CBFs into VLA frameworks presents two practical challenges. First, standard CBFs depend on precise geometric states, such as obstacle positions and shapes, creating a perception gap when dealing with the raw visual inputs of VLA models \cite{liu2025safety}. Second, traditional geometric barriers are semantic-agnostic. All objects within the environment are treated indiscriminately as static obstacles without understanding the task context or assessing their specific safety levels \cite{srinivasan2020synthesis}, which highlights the need for a pipeline that can extract task-relevant geometric primitives from visual data to ground the CBF constraints.

\begin{figure*}[htbp]
  \centering
  \includegraphics[width=0.8\linewidth]{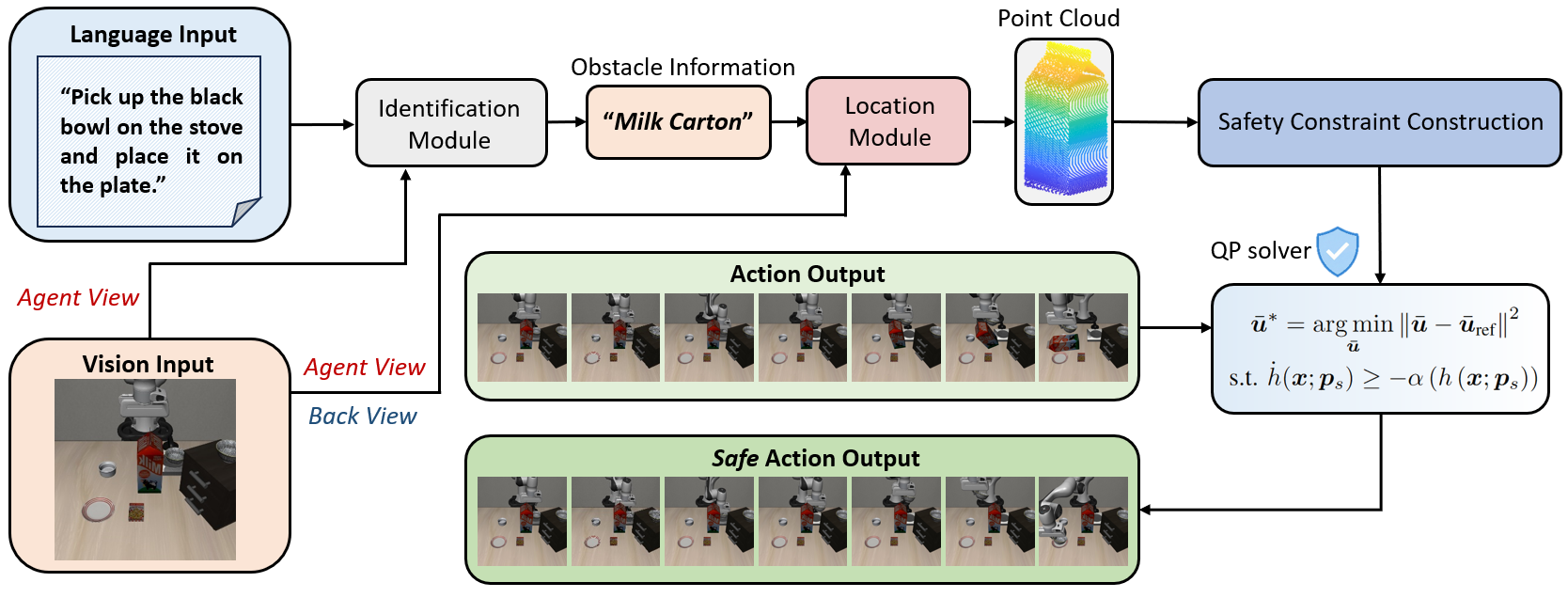}  
  \caption{Workflow of the AEGIS model.}
  \label{fig:example-image}
\end{figure*}
\section{Problem Formulation}
\label{gen_inst}

Consider a robotic manipulation task where a VLA model serves as the high-level policy. Let $o_t$ denote the visual observation and $l$ be the natural language instruction. At each time step, the VLA model predicts a reference action $\bm{u}_{\text{vla}} = [\bm{v}_{\text{vla}}^\top, \bm{\omega}_{\text{vla}}^\top, g_{\text{vla}}]^\top$ based on the inputs, comprising the end-effector translational velocity $\bm{v}_{\text{vla}}$, rotational velocity $\bm{\omega}_{\text{vla}}$, and gripper command $g_{\text{vla}}$. However, since the raw VLA output lacks explicit guarantees for physical safety, we introduce a safety filter based on CBFs.

We model the entire end-effector assembly and task-relevant obstacles as ellipsoids. The safety of the system is encoded by a continuously differentiable function $h(\bm{x})$ \cite{ames2019control}, where $\bm{x}$ consists of the geometric parameters of the ellipsoids and a virtual auxiliary state. The set of safe states is defined as the superlevel set:
\begin{equation}
\mathcal{C} = \{ \bm{x} \mid h(\bm{x}) \geq 0 \}.    
\end{equation}

To strictly enforce the forward invariance of $\mathcal{C}$, we formulate the control problem as a optimization problem, which seeks a safe control input ${\bm{u}}_{\textbf{safe}}$ that minimally deviates from the nominal VLA reference:

\begin{equation}
\label{eq:QP}
\begin{gathered}
{\bm{u}}_{\textbf{safe}}=\underset{{\bm{u}}}{\arg \min }\left\|{\bm{u}}-{\bm{u}}_{\text {vla}}\right\|^2 \\
\text { s.t. } \dot{h}(\bm{x}) \geq-\alpha\left(h\left(\bm{x}\right)\right),
\end{gathered}
\end{equation}
where $\alpha(\cdot)$ is an extended class-$\mathcal{K}_\infty$ function governing the convergence rate to the safe set boundary \cite{drazin1992nonlinear}. The resulting control input $\bm{u}_{\text{safe}}$ is then executed by the robot manipulator.

Our primary objective is to construct a CBF $h$ derived from the language instruction $l$ and visual observation $o_t$, and to solve Eq. \eqref{eq:QP} in real-time for safety-guaranteed control.

\begin{figure}[htbp]
  \centering
  
  \begin{subfigure}[b]{0.49\linewidth}
    \centering
    \includegraphics[width=\linewidth]{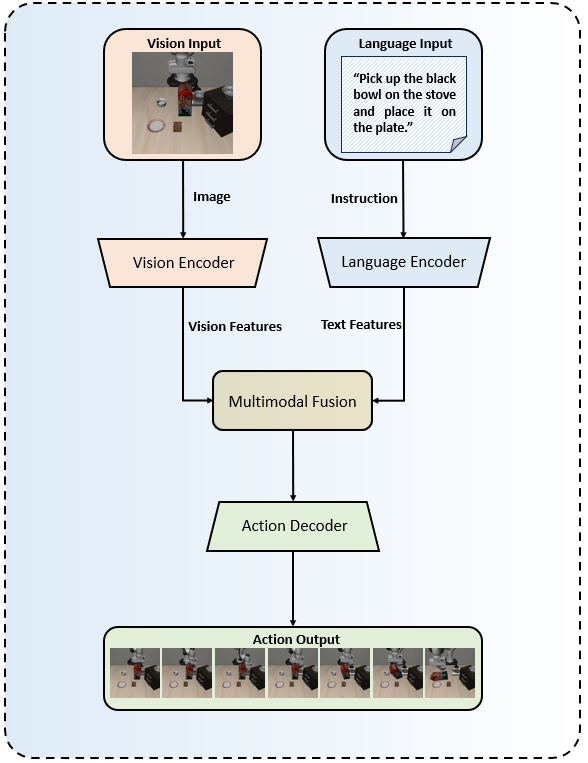}
    \caption{Vision-Language-Action (VLA) Models}
    \label{fig:sub1}
  \end{subfigure}
  \hfill
  \begin{subfigure}[b]{0.49\linewidth}
    \centering
    \includegraphics[width=\linewidth]{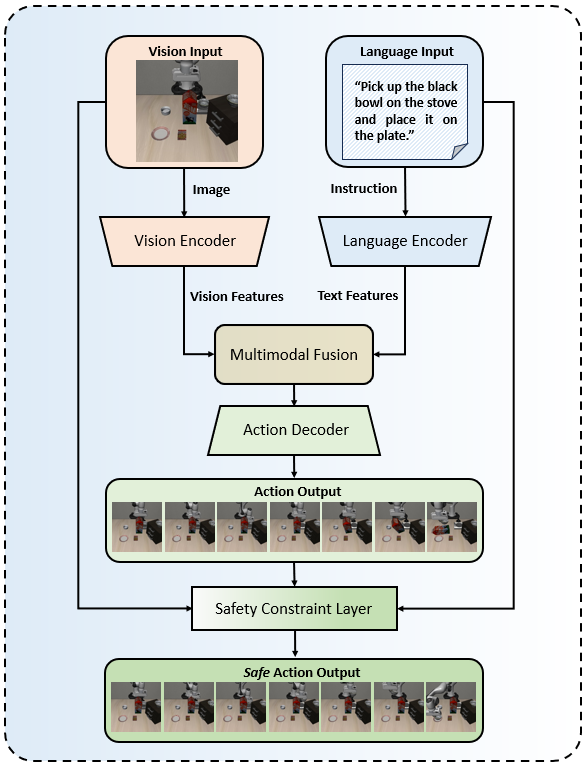}
    \caption{Vision-Language-Safe Action (VLSA) Models}
    \label{fig:sub2}
  \end{subfigure}

  \caption{Functional architecture of VLA and VLSA models.}
  \label{fig:overall}
\end{figure}

\section{VLSA Models and AEGIS Framework}
\label{headings}

\subsection{Main Architecture}
The main architectures of AEGIS and VLSA are shown in Figs.~\ref{fig:example-image} and \ref{fig:overall}, respectively. Similar to most VLA architectures, VLSA uses a visual encoder, a language encoder, multimodal fusion, and an action decoder to generate actions from linguistic and visual inputs. Distinctly, as shown in Fig.~\ref{fig:overall}, VLSA incorporates an additional SC layer positioned after the original VLA action output. The SC layer receives visual features, linguistic features, and the action output from the base VLA model. While conventional VLA models emphasize task completion, they typically overlook safety considerations during execution. In contrast, the SC layer modifies potentially unsafe actions into safe alternatives. If no safety risks are identified, the SC layer output remains identical to the original VLA output, thereby maintaining the model’s baseline performance.

To achieve this, AEGIS implements the SC layer by integrating two interconnected modules: a vision-language-based safety assessment module and an action-driven safety-guaranteed control module. As illustrated in Fig.~\ref{fig:example-image}, the safety assessment module first processes linguistic instructions and visual observations to semantically identify and spatially localize the most critical obstacle in the 3D workspace. This refined spatial information is then fed into the safety-guaranteed control module, which models the collision geometry and leverages a CBF approach to correct potentially unsafe actions via a convex QP solver. The detailed formulations of these two modules are presented in Section \ref{subsec:assessment} and Section \ref{subsec:control}, respectively.

\subsection{Vision-Language-based Safety Assessment Module}
\label{subsec:assessment}
\begin{figure*}[htbp]
  \centering
  \includegraphics[width=0.9\linewidth]{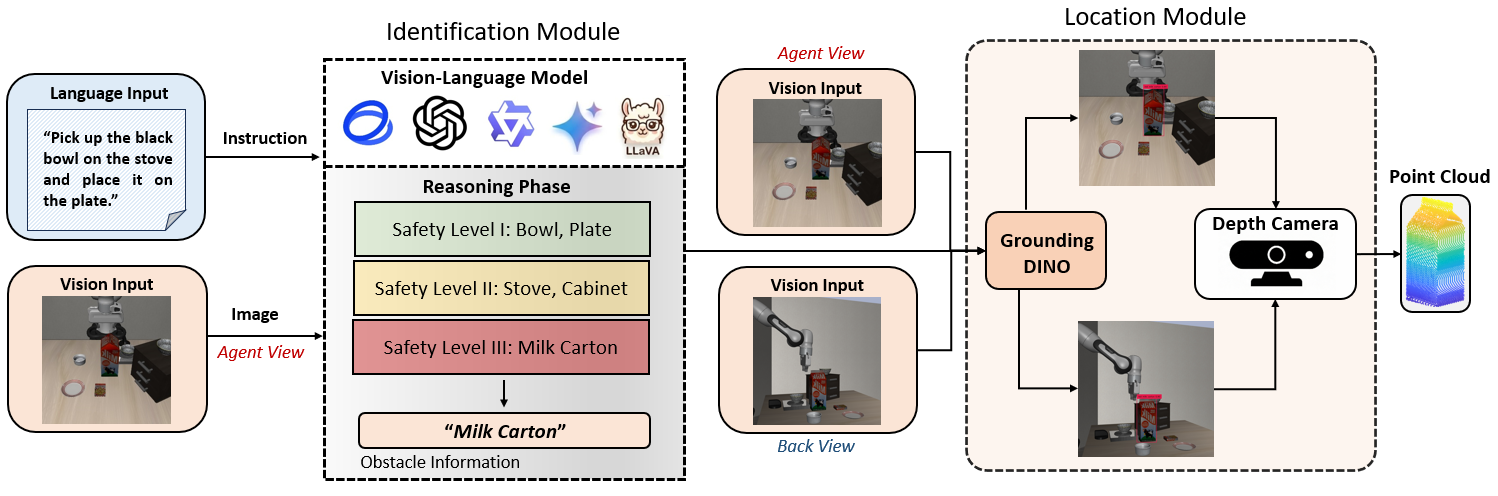}  
  \caption{Pipeline of the vision-language-based safety assessment module.}
  \label{fig:example-image_SA}
\end{figure*}


Safety assessment is foundational for synthesizing safe actions \cite{liu2024online,hu2024cadm+}. As shown in Fig.~\ref{fig:example-image_SA}, our vision-language-based module identifies and localizes potential obstacles that may interfere with the  robot in two main stages: semantic-level identification and precise spatial localization.



In the first stage, a VLM infers potential obstacles by jointly processing a natural language task instruction (e.g., \textit{``Pick up the black bowl...''}) and an agent-view RGB image. To ensure consistent reasoning, the VLM is explicitly prompted to identify the single most critical non-robot obstacle. The constrained output must contain only a uniquely identifiable obstacle name, including its color and type, preferably selected from a predefined list.


The VLM subsequently outputs a concise, uniquely identifiable object name (e.g., \textit{``Red milk carton''}), which directly serves as a textual grounding query for spatial localization. To achieve this, we employ GroundingDINO \cite{liu2024grounding} to process the RGB image alongside the textual query, generating candidate 2D bounding boxes for the identified obstacle. To ensure reliable localization, only the bounding box with the highest confidence score is retained for the  3D projection.

Subsequently, the spatial region corresponding to the selected 2D bounding box is back-projected into 3D space. To obtain a dense and complete 3D representation of the obstacle, we fuse the local point clouds generated from both the agent-view camera and an auxiliary back-view depth sensor. This fusion requires aligning the data into a unified world coordinate system using the standard camera projection model:
\begin{equation}
\left[\begin{array}{c}
X \\
Y \\
Z \\
1
\end{array}\right]=T_{\text {cam }}^{\text {world }} \cdot\left[\begin{array}{c}
K^{-1} \cdot d \cdot\left[\begin{array}{c}
u \\
v \\
1
\end{array}\right] \\
1
\end{array}\right]
\end{equation}
where $(u,v)$ denote the pixel coordinates, $d$ is the depth value, $K$ is the camera intrinsic matrix, and $T_{\text{cam}}^{\text{world}}$ represents the extrinsic transformation from the camera frame to the world frame.

To ensure the reliability of the downstream control module, a rigorous preprocessing pipeline is applied to the fused point cloud. We first impose predefined workspace bounds to filter out irrelevant background points. Next, distance-based outlier removal is performed by discarding the farthest 20\% of points relative to the data centroid. Finally, a clustering operation \cite{khan2014dbscan} isolates the most populous cluster, effectively extracting the main obstacle body.

\subsection{Action-driven Safety-guaranteed Control Module}
\label{subsec:control}

To represent the safety threat efficiently, we adopt a \textit{minimum volume enclosing ellipsoid} (MVEE) to enclose the processed obstacle point cloud $\{\bm{x}_i\}_{i=1}^n$. The obstacle ellipsoid $\mathcal{E}_{ob}$ is defined as:
\begin{equation}
\mathcal{E}_{ob} = \left\{ \bm{x} \in \mathbb{R}^d  \middle|\ (\bm{x} - \bm{c})^\top \bm{R}^\top \bm{Q} \bm{R} (\bm{x} - \bm{c}) \leq 1 \right\},
\end{equation}
where $\bm{c} \in \mathbb{R}^d$ is the center, $\bm{R} \in \mathrm{SO}(d)$ is the orientation matrix, and $\bm{Q} \succ 0$ determines the shape and size. The optimal parameters are obtained by solving the following optimization problem, where minimizing $-\log \det(\bm{Q})$ inherently minimizes the ellipsoid volume:
\begin{equation}
\label{eq:fit}
\begin{aligned}
&\min_{\bm{Q} \succ 0,\ \bm{c},\ \bm{R} \in \mathrm{SO}(d)}   -\log \det(\bm{Q}) \\
s.t. \quad &
\begin{bmatrix}
\bm{Q} & \bm{R}(\bm{x}_i - \bm{c}) \\
(\bm{R}(\bm{x}_i - \bm{c}))^\top & 1
\end{bmatrix} \succeq 0,  i = 1, \dots, n \\
& \bm{R}^\top \bm{R} = \bm{I}, \quad \det(\bm{R}) = 1.
\end{aligned}
\end{equation}

\begin{figure}[htbp]
  \centering
  \includegraphics[width=\linewidth]{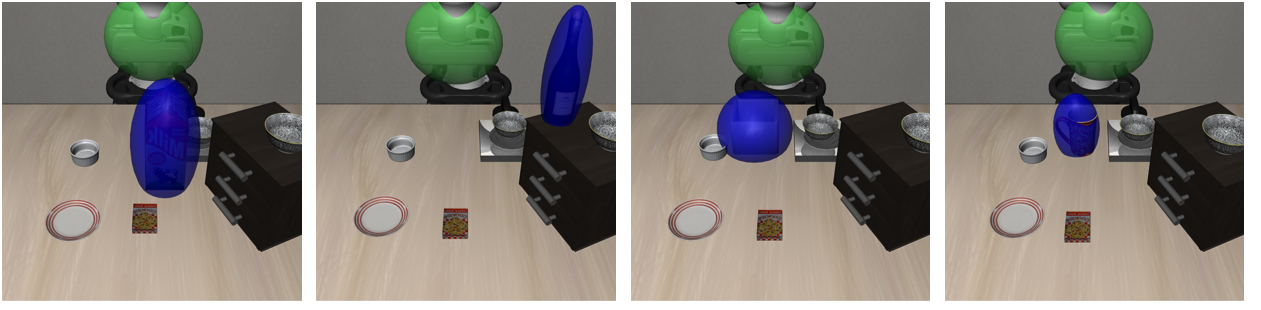}
  \caption{MVEE fitting for the end-effector and the obstacle.}
\end{figure}

Similarly, we model the robot's end-effector as an ellipsoid $\mathcal{E}_{ef}$ with a fixed size matrix $\bm{Q}_{ef}$. Let $\bm{p}_{ef}$ and $\bm{R}_{ef}$ denote the position and rotation matrix of the end-effector, respectively. With a constant structural offset $\Delta \bm{p}$, the ellipsoid center $\bm{p}_{ep}$ is rigidly attached to the end-effector kinematics:
\begin{equation}
\label{eq:p_ep}
\bm{p}_{ep} = \bm{p}_{ef} + \bm{R}_{ef} \Delta \bm{p}.
\end{equation}

To formulate the collision avoidance dynamics, we define an augmented state vector $\bm{x} = [\bm{p}_{ep}, \bm{R}_{ef}, \bm{p}_s]$. Here, $\bm{p}_s$ is a virtual auxiliary state on the unit sphere, mapping to a point $\bm{p}_b$ on the surface of $\mathcal{E}_{ef}$ via:
\begin{equation}
    \bm{p}_b = \bar{\bm{Q}}_{ef}\bm{p}_s + \bm{p}_{ep},
\end{equation}
where $\bar{\bm{Q}}_{ef}=\bm{R}_{ef}\bm{Q}_{ef}\bm{R}_{ef}^\top$. The tangent plane $\mathcal{T}$ at $\bm{p}_b$ is then given by:
\begin{equation}
\mathcal{T}=\left\{\bm{q} \in \mathbb{R}^d \mid \bm{p}_{ef}^{\top} \bar{\bm{Q}}_{ef}^{-1} \bm{q}-\left(1+\bm{p}_{ef}^{\top} \bar{\bm{Q}}_{ef}^{-1} \bm{p}_{ef}\right)=0\right\}.
\end{equation}

Assuming $\mathcal{T}$ initially separates the two ellipsoids, the signed distance $h(\bm{x})$ from the obstacle ellipsoid $\mathcal{E}_{ob}$ to $\mathcal{T}$ is derived as:
\begin{equation}
\label{eq:h}
h(\bm{x})=\frac{-\left\|\bar{\bm{Q}}_{ob} \bar{\bm{Q}}_{ef}^{-1} \bm{p}_{s}\right\|+\left(\bm{p}_{ob}-\bm{p}_{ef}\right)^{\top} \bar{\bm{Q}}_{ef}^{-1} \bm{p}_{s}-1}{\left\|\bar{\bm{Q}}_{ef}^{-1} \bm{p}_{s}\right\|}.
\end{equation}

By maximizing $h$ over $\bm{p}_s$, we obtain the shortest distance between the two ellipsoids. This formulation allows $h(\bm{x})$ to serve as a valid CBF \cite{funada2024collision}. By dynamically controlling the virtual state $\bm{p}_s$, the conservativeness of the avoidance will be reduced. The nominal VLA actions are subsequently adjusted into safe control commands via the QP in Eq.~(\ref{eq:QP}). The theoretical safety guarantee is established in Theorem \ref{thm:safety}.

\begin{theorem}
\label{thm:safety}
Assuming accurate safety assessment, precise point cloud filtering, and complete obstacle representation such that the two generated MVEEs strictly enclose the obstacle and the robot end-effector respectively, the AEGIS framework guarantees that the entire robot end-effector will not collide with the obstacle.
\end{theorem}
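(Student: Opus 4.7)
The plan is to reduce the physical non-collision claim to a forward-invariance statement on the safe set $\mathcal{C} = \{\bm{x} \mid h(\bm{x}) \geq 0\}$, and then to invoke the standard control-barrier-function theorem. By the enclosure hypothesis, proving $\mathcal{E}_{ef}(t) \cap \mathcal{E}_{ob}(t) = \emptyset$ for every $t \geq 0$ is enough to rule out contact between the physical end-effector and the physical obstacle, since both are strictly contained in their respective MVEEs.

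First I would establish the geometric meaning of $h$ in Eq.~\eqref{eq:h}. For a fixed $\bm{p}_s$ on the unit sphere, the point $\bm{p}_b = \bar{\bm{Q}}_{ef}\bm{p}_s + \bm{p}_{ep}$ lies on the boundary of $\mathcal{E}_{ef}$, and $\mathcal{T}$ is the unique tangent plane of $\mathcal{E}_{ef}$ at $\bm{p}_b$. Because $\mathcal{E}_{ef}$ is convex, $\mathcal{T}$ is a supporting hyperplane and $\mathcal{E}_{ef}$ lies entirely in one of the two closed half-spaces determined by $\mathcal{T}$. The quantity $h(\bm{x})$ is the signed distance from $\mathcal{E}_{ob}$ to $\mathcal{T}$ measured toward the opposite half-space. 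Hence $h(\bm{x}) \geq 0$ implies that $\mathcal{T}$ separates $\mathcal{E}_{ef}$ from $\mathcal{E}_{ob}$, and the separation theorem for convex sets gives $\mathcal{E}_{ef} \cap \mathcal{E}_{ob} = \emptyset$.

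Next I would apply the CBF forward-invariance argument. The augmented state $\bm{x} = [\bm{p}_{ep}, \bm{R}_{ef}, \bm{p}_s]$ evolves under the stacked input $\bm{u} = [\bm{u}_{\text{safe}}, \bm{u}_{p_s}]$, which enters $\dot{h}$ affinely since the underlying kinematics are velocity-controlled and $h$ is smooth in $\bm{x}$ wherever the denominator in Eq.~\eqref{eq:h} does not vanish. The QP~\eqref{eq:QP} enforces $\dot{h}(\bm{x}) \geq -\alpha(h(\bm{x}))$ pointwise along the closed-loop trajectory, and a standard Nagumo/Comparison-Lemma argument then yields $h(\bm{x}(t)) \geq 0$ whenever $h(\bm{x}(0)) \geq 0$, giving collision-free ellipsoids for all $t \geq 0$. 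The enclosure hypothesis finally lifts this to no physical collision.

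The hard part will be discharging the technical premises needed for the CBF theorem to apply. Specifically, (i) the initial condition must satisfy $h(\bm{x}(0)) \geq 0$, which is implicit in the statement and should be recorded as a precondition expressing that the robot starts in a configuration admitting a separating tangent plane; (ii) the QP must remain feasible throughout execution, which requires $h$ to have relative degree one with respect to the combined input $\bm{u}$ — here the virtual auxiliary state $\bm{p}_s$, introduced so that the separating-plane direction becomes itself a controllable degree of freedom in the spirit of \cite{funada2024collision}, is the mechanism that guarantees the control appears nontrivially in $\dot{h}$; and (iii) $h$ must remain smooth along the trajectory, avoiding degeneracies such as $\bar{\bm{Q}}_{ef}^{-1}\bm{p}_s = 0$, which can be ensured by standard barrier regularization or by keeping $\bm{p}_s$ away from the singular manifold through the $\bm{u}_{p_s}$ channel. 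Once these conditions are discharged, the forward-invariance step is a direct instance of the result of Ames et al.
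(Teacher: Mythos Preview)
Your proposal is correct and follows essentially the same route as the paper: forward invariance of $\mathcal{C}$ via the CBF constraint $\dot h \ge -\alpha(h)$ and Nagumo's theorem, then the subset inclusion $\mathcal{A}_{ef}\subseteq\mathcal{E}_{ef}$, $\mathcal{O}_{ob}\subseteq\mathcal{E}_{ob}$ to transfer disjointness of the proxy ellipsoids to the physical bodies. The paper simply cites \cite{funada2024collision} for the fact that $h$ is a valid CBF encoding ellipsoid separation, whereas you unpack the separating-hyperplane interpretation and the relative-degree/feasibility conditions explicitly; this extra care is sound but not a different argument.
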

\begin{proof}
According to \cite{funada2024collision}, the function $h$ defined in Eq.~(9) constitutes a valid CBF that characterizes the safe superlevel set $\mathcal{C} = \{ \bm{x} \mid h(\bm{x}) \geq 0 \}$, where any augmented state $\bm{x} \in \mathcal{C}$ corresponds to a collision-free configuration between the robot end-effector and the obstacle. Given that the system initializes in a safe configuration (i.e., $h(\bm{x}(t_0)) > 0$), the QP solver strictly enforces the differential constraint $\dot{h} \geq -\alpha(h)$. By Nagumo's Theorem \cite{blanchini1999set}, this condition ensures the forward invariance of $\mathcal{C}$, implying $h(\bm{x}(t)) \geq 0$ and thus $\bm{x}(t) \in \mathcal{C}$ for all $t \geq t_0$. Geometrically, this guarantees that the intersection of the proxy ellipsoids remains empty, i.e., $\mathcal{E}_{ef} \cap \mathcal{E}_{ob} = \varnothing$. Furthermore, based on the strict constraint Eq. (\ref{eq:fit}) where the robot end-effector $\mathcal{A}_{ef}$ and the obstacle $\mathcal{O}_{ob}$ satisfy $\mathcal{A}_{ef} \subseteq \mathcal{E}_{ef}$ and $\mathcal{O}_{ob} \subseteq \mathcal{E}_{ob}$, the disjointness of the supersets necessitates the disjointness of their subsets: $\mathcal{A}_{ef} \cap \mathcal{O}_{ob} \subseteq \mathcal{E}_{ef} \cap \mathcal{E}_{ob} = \varnothing$. Consequently, the physical robot end-effector is guaranteed to remain collision-free.
\end{proof}
\section{Simulation Studies}
\label{sec:experiments}

\subsection{Setup}


\textbf{Benchmark.} The original LIBERO benchmark \cite{liu2023libero} lacks safety-critical scenarios, as objects rarely collide with the robot arm during movement. Therefore, we introduce \mbox{SafeLIBERO}, a comprehensive benchmark for evaluating safety-aware manipulation in complex environments. We select 16 tasks across four LIBERO suites (Spatial, Object, Goal, Long) and introduce a diverse set of everyday objects as obstacles to create two safety-critical scenarios per task: Level I (obstacles placed near the target object) and Level II (obstacles placed directly obstructing the movement path). We randomize object poses across 50 episodes per scenario, resulting in 32 unique scenarios and 1,600 total episodes.
The benchmark tasks are illustrated in Fig.~\ref{fig:benchmark}.

\begin{figure}[htbp] 
    \centering
    \scriptsize 
    \begin{minipage}[t]{0.49\linewidth}
        \centering
        {\tiny \textbf{Pick up the orange juice and place it in the basket}}
        \par\vspace{1pt}
        \begin{subfigure}[b]{0.48\linewidth}
            \includegraphics[width=\linewidth]{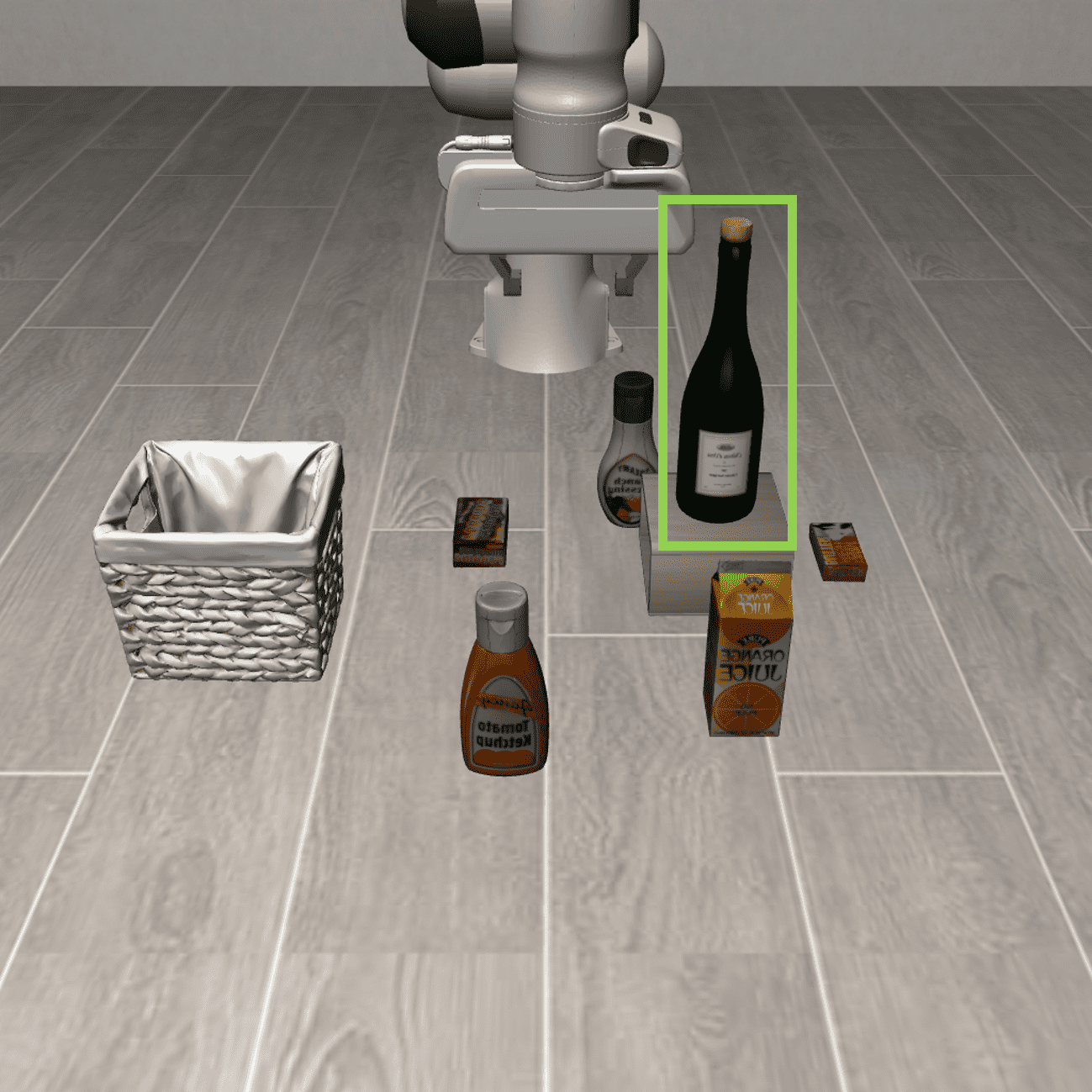}
        \end{subfigure}
        \hfill
        \begin{subfigure}[b]{0.48\linewidth}
            \includegraphics[width=\linewidth]{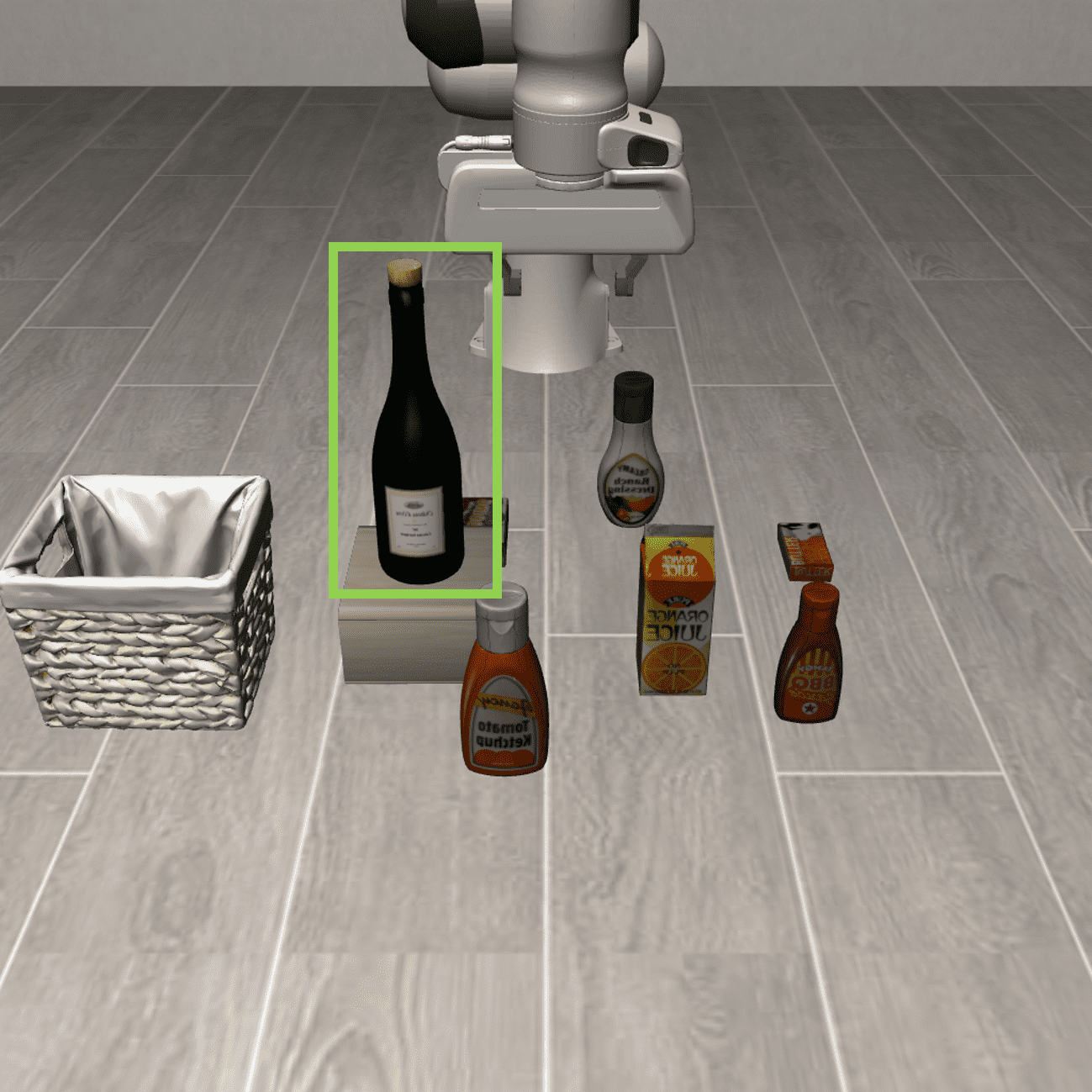}
        \end{subfigure}
        \par
        {\tiny (a) Object}
    \end{minipage}    
    \hfill
    \begin{minipage}[t]{0.49\linewidth}
        \centering
        {\tiny \textbf{Put the bowl on top of the cabinet}}
        \par\vspace{1pt}
        \begin{subfigure}[b]{0.48\linewidth}
            \includegraphics[width=\linewidth]{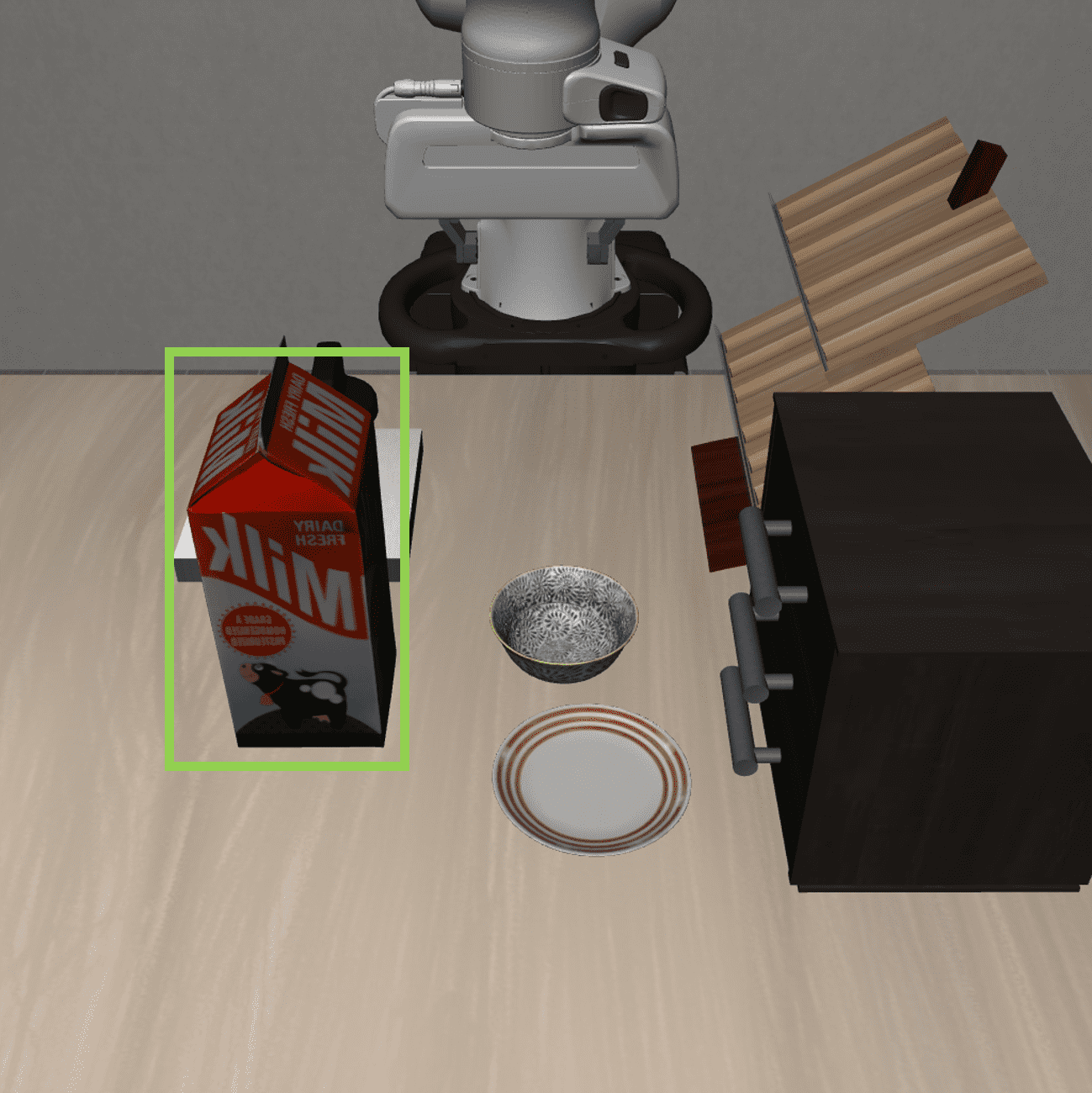}
        \end{subfigure}
        \hfill
        \begin{subfigure}[b]{0.48\linewidth}
            \includegraphics[width=\linewidth]{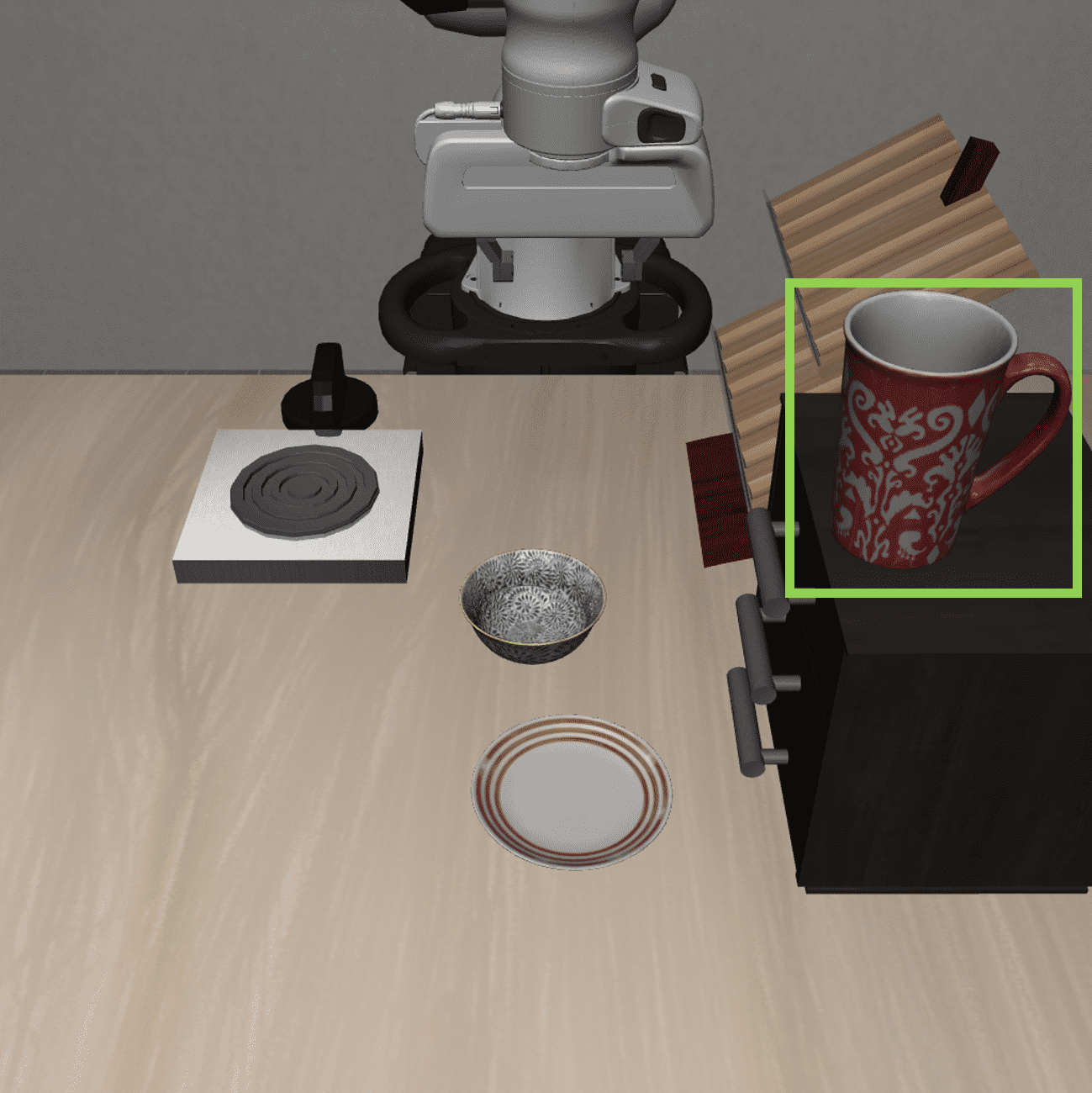}
        \end{subfigure}
        \par
        {\tiny (b) Goal}
    \end{minipage}
    
    \vspace{2mm} 
    

    \begin{minipage}[t]{0.49\linewidth}
        \centering
        {\tiny \textbf{Pick up the black bowl on the stove\\ and place it on the plate}} 
        \par\vspace{1pt}
        \begin{subfigure}[b]{0.49\linewidth}
            \includegraphics[width=\linewidth]{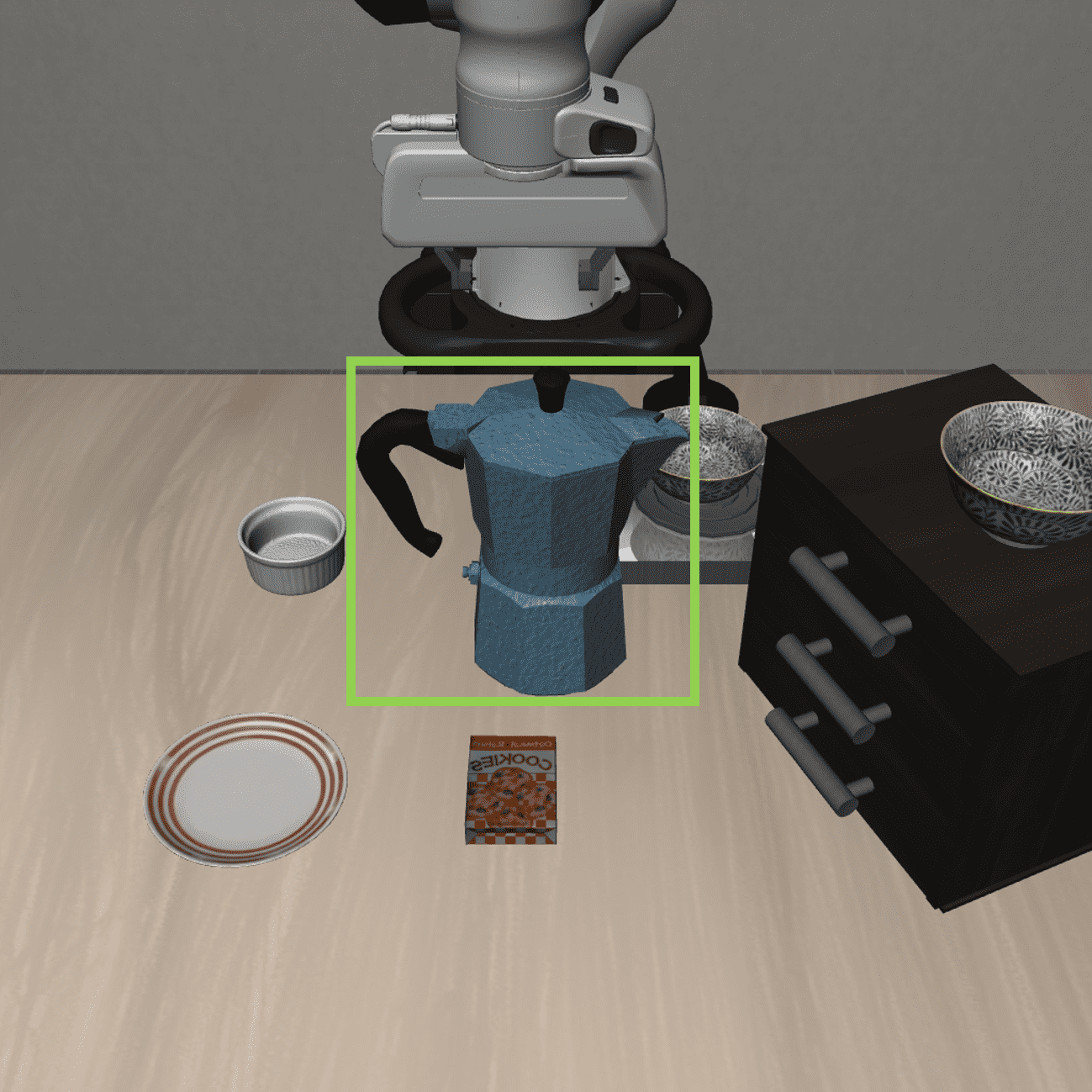}
        \end{subfigure}
        \hfill
        \begin{subfigure}[b]{0.49\linewidth}
            \includegraphics[width=\linewidth]{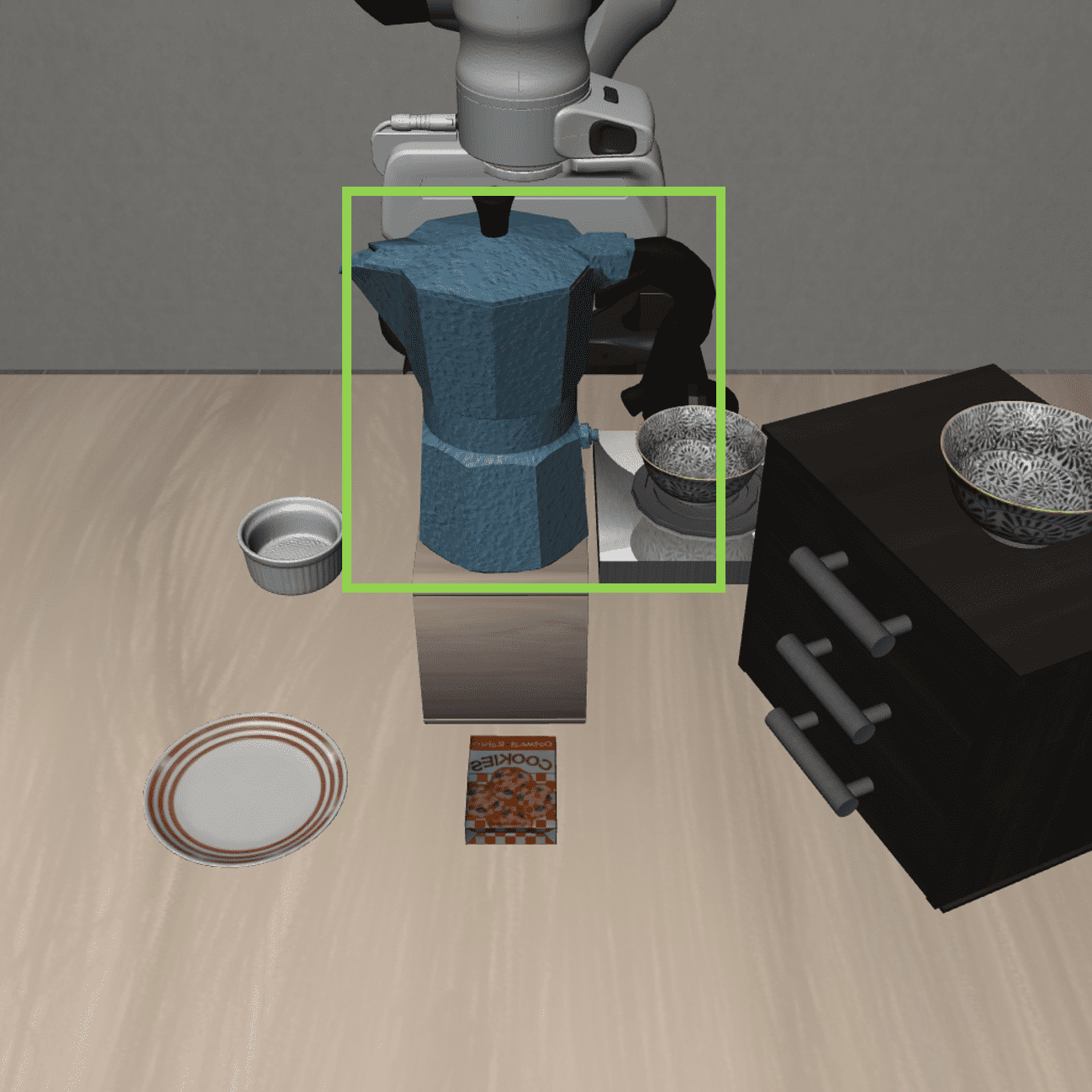}
        \end{subfigure}
        \par
        {\tiny (c) Spatial} 
    \end{minipage}
    \hfill
    \begin{minipage}[t]{0.49\linewidth}
        \centering
        {\tiny \textbf{Put the white mug on the left plate and put the yellow \\
        and white mug on the right plate}}
        \par\vspace{1pt}
        \begin{subfigure}[b]{0.49\linewidth}
            \includegraphics[width=\linewidth]{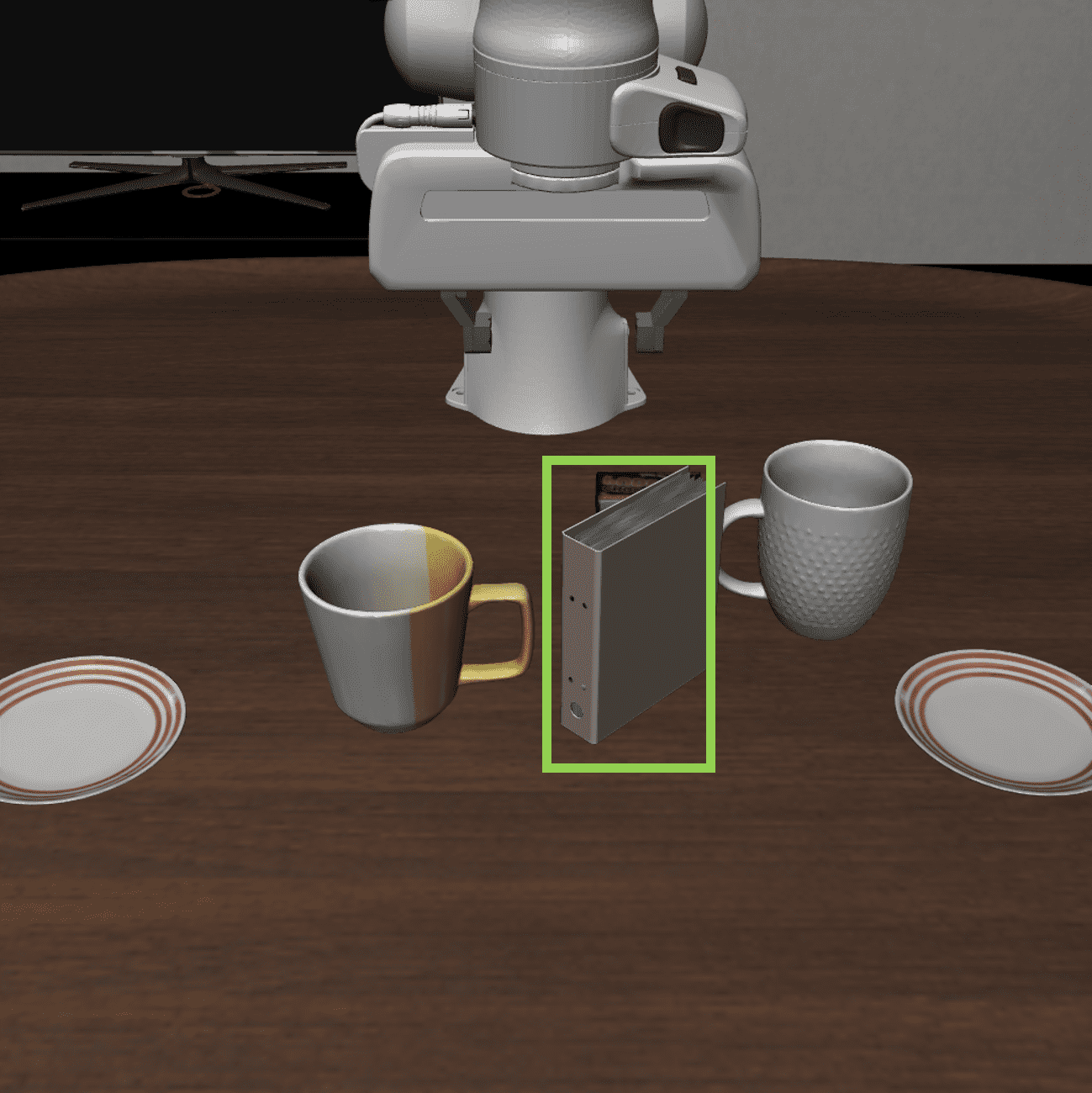}
        \end{subfigure}
        \hfill
        \begin{subfigure}[b]{0.48\linewidth}
            \includegraphics[width=\linewidth]{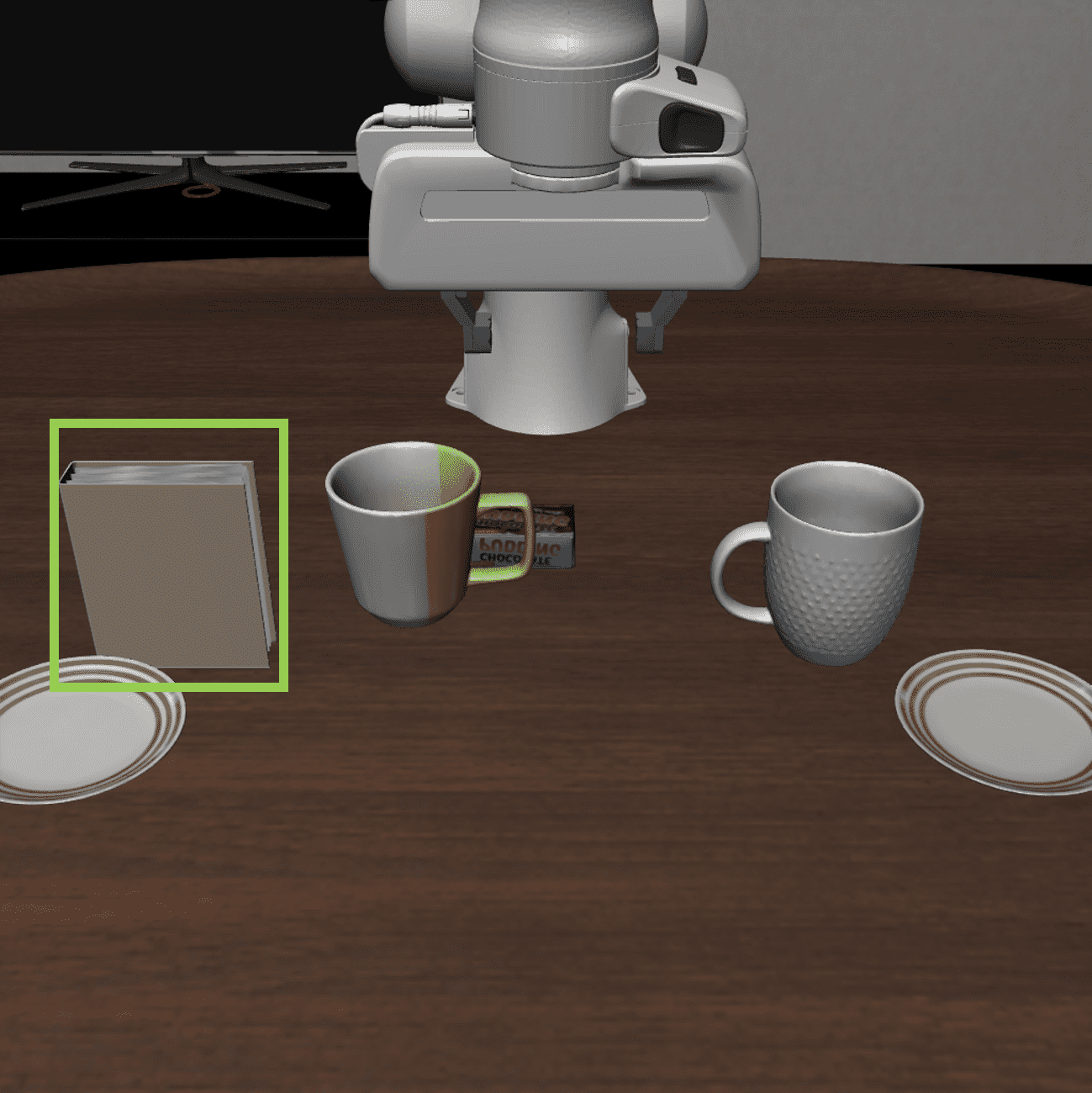}
        \end{subfigure}
        \par
        {\tiny (d) Long}
    \end{minipage}
    
    \caption{Overview of SafeLIBERO benchmark tasks.}
    \label{fig:benchmark}
\end{figure}
\textbf{Baselines.} We utilize the state-of-the-art flow-matching VLA model, $\pi_{0.5}$-LIBERO~\cite{intelligence2025pi}, as our base policy. We compare against the base $\pi_{0.5}$ policy to quantify the direct safety improvements of our plug-and-play module, and against OpenVLA-OFT~\cite{kim2025fine}, a transformer-based VLA adapted via online fine-tuning, to provide a robust cross-architectural baseline. All models are evaluated under identical conditions.

\begin{table*}[htbp]
    \centering
    \renewcommand{\arraystretch}{1.5} 
    \setlength{\tabcolsep}{2pt} 
    
    \caption{\textbf{Quantitative results on the SafeLIBERO benchmark.}}
    \label{tab:main_results}
    
    \resizebox{\textwidth}{!}{%
        \begin{tabular}{l|ccc|ccc|ccc|ccc|ccc|ccc}
            
            \specialrule{0.12em}{1pt}{1pt}
            \specialrule{0.12em}{1pt}{1pt}

            \multirow{3}{*}{\textbf{Task}} & 
            \multicolumn{9}{c|}{\textbf{Translational Action Space Only}} & 
            \multicolumn{9}{c}{\textbf{Full Action Space}} \\
            \cmidrule(lr){2-10} \cmidrule(lr){11-19}

             & \multicolumn{3}{c|}{\textbf{OpenVLA-OFT}$_t$} & \multicolumn{3}{c|}{$\pi_{0.5,t}$} & \multicolumn{3}{c|}{\textbf{Ours}$_t$} 
             & \multicolumn{3}{c|}{\textbf{OpenVLA-OFT}} & \multicolumn{3}{c|}{$\pi_{0.5}$} & \multicolumn{3}{c}{\textbf{Ours}} \\
             
            \cmidrule(lr){2-4} \cmidrule(lr){5-7} \cmidrule(lr){8-10} 
            \cmidrule(lr){11-13} \cmidrule(lr){14-16} \cmidrule(lr){17-19}
            
             & CAR$\uparrow$ & TSR$\uparrow$ & ETS$\downarrow$ 
             & CAR$\uparrow$ & TSR$\uparrow$ & ETS$\downarrow$ 
             & CAR$\uparrow$ & TSR$\uparrow$ & ETS$\downarrow$ 
             & CAR$\uparrow$ & TSR$\uparrow$ & ETS$\downarrow$ 
             & CAR$\uparrow$ & TSR$\uparrow$ & ETS$\downarrow$ 
             & CAR$\uparrow$ & TSR$\uparrow$ & ETS$\downarrow$ \\

            \specialrule{0.10em}{0pt}{0pt}

            
            Spatial & 
            12.8\% & 35.8\% & 238.1 & 
            15.3\% & 59.8\% & 201.7 & 
            \textbf{75.5\%} & \textbf{73.3\%} & \textbf{188.2} &
            8.3\% & 36.5\% & 235.1 &      
            14.0\% & 59.3\% & 199.5 &     
            \textbf{68.0\%} & \textbf{68.5\%} & \textbf{194.9} \\ 

            Goal & 
            25.0\% & 22.5\% & 255.6 & 
            23.8\% & 54.3\% & 210.3 & 
            \textbf{81.5\%} & \textbf{75.3\%} & \textbf{179.6} &
            19.5\% & 24.3\% & 249.0 &     
            20.0\% & 60.0\% & 199.7 &     
            \textbf{76.5\%} & \textbf{82.8\%} & \textbf{166.3} \\

            Object & 
            17.3\% & 29.5\% & 257.4 & 
            23.0\% & 53.8\% & 223.0 & 
            \textbf{74.8\%} & \textbf{80.3\%} & \textbf{201.3} &
            11.5\% & 28.8\% & 259.9 &     
            18.0\% & 57.5\% & 223.5 &     
            \textbf{71.3\%} & \textbf{72.5\%} & \textbf{214.2} \\

            Long & 
            5.5\% & 3.5\% & 541.5 & 
            12.8\% & 35.8\% & 478.0 & 
            \textbf{79.6\%} & \textbf{43.8\%} & 480.1 &
            6.0\% & 15.3\% & 511.7 &      
            16.5\% & \textbf{54.3\%} & \textbf{420.5} &      
            \textbf{59.8\%} & 46.3\% & 455.1 \\

            \specialrule{0.05em}{0pt}{0pt}

            \textbf{Average} & 
            15.1\% & 22.8\% & 323.2 & 
            18.7\% & 50.9\% & 278.2 & 
            \textbf{77.9\%} & \textbf{68.1\%} & \textbf{262.3} &
            5.7\% & 26.2\% & 313.9 &      
            17.1\% & 57.8\% & 260.8 &     
            \textbf{68.9\%} & \textbf{67.5\%} & \textbf{257.6} \\

            \specialrule{0.1em}{1pt}{1pt}
            \specialrule{0.1em}{1pt}{1pt}
            
        \end{tabular}%
    } 
    
    \begin{tablenotes}
        \footnotesize
        \item[] \textbf{Notes}: subscript $t$ denotes policies with \textit{translational} action space only. \textbf{CAR}: \textit{Collision Avoidance Rate}; \textbf{TSR}: \textit{Task Success Rate}; \textbf{ETS}: \textit{Execution Time Steps}. Best results are highlighted in \textbf{bold}.
    \end{tablenotes}
\end{table*}

\begin{figure*}[htbp]
    \centering
    \includegraphics[width=0.85\linewidth]{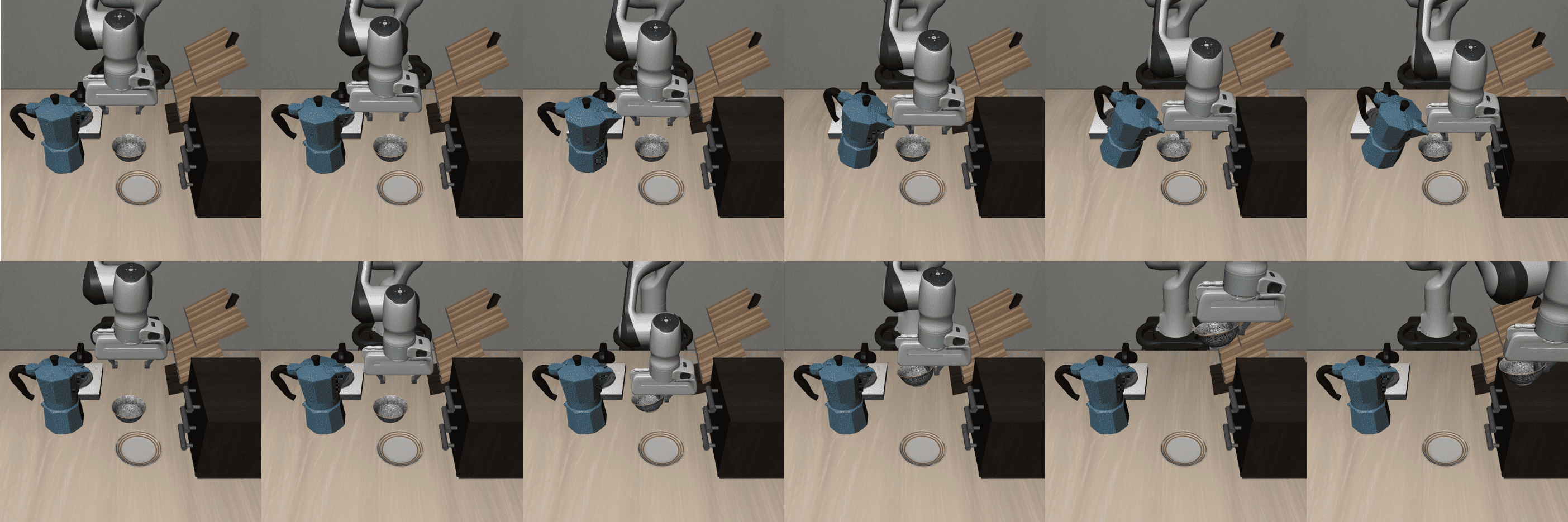} 
    \caption{\textbf{Visual comparison of task execution processes.} The task instruction is \textit{``Put the bowl on top of the cabinet.''} \textbf{Top row:} The baseline policy $\pi_{0.5}$ blindly executes nominal actions, resulting in collisions and task failure. \textbf{Bottom row:} AEGIS dynamically generates collision-free actions to achieve task success. For more qualitative results across other task suites, please refer to the  supplementary video.}
    \label{fig:trajectory_pi05_ours}
\end{figure*}

\textbf{Evaluation Metrics.} We evaluate performance using three metrics: (1) \textit{Collision Avoidance Rate (CAR)}: the percentage of strictly collision-free episodes; (2) \textit{Task Success Rate (TSR)}: the percentage of tasks successfully completed within the time limit, noting that collisions do not trigger early termination; and (3) \textit{Execution Time Steps (ETS)}: the average episode length (including timeouts), where lower values indicate higher efficiency and fewer futile interactions.

\textbf{Settings.} We utilize a Franka Emika Panda robot manipulator controlled via the {OSC\_POSE} interface provided by Robosuite \cite{zhu2020robosuite}, operating at 20 Hz. To provide a comprehensive evaluation, simulation studies are conducted under both translational-only and full action-space settings. The translational setting is explicitly considered because SafeLIBERO tasks predominantly involve top-down manipulation. This configuration reduces action redundancy and enables a focused assessment of positional collision-avoidance performance. In this case, the system dynamics are simplified to the translational kinematic model $\dot{\bm{p}} = 0.2 \bm{u}$, where $\bm{p}$ denotes the end-effector position and $\bm{u}$ is the translational action command. In the full action-space setting, both translational and rotational dynamics are incorporated. The system dynamics are described by $\dot{\bm{p}} = 0.2 \bm{u}_{1:3}$ and $\dot{\bm{\theta}} = 0.2 \bm{u}_{4:6}$, where $\bm{\theta}$ denotes the end-effector orientation and $\bm{u}_{1:6}$ corresponds to the six-dimensional action command. Geometrically, the end-effector is approximated as an MVEE with a size matrix $\bm{Q}_{ef} = \text{diag}(0.06, 0.12, 0.11)$ meters. For safety control, the CBF class-$\mathcal{K}_\infty$ function is set to $\alpha(h) = 10 h$ with a reference control coefficient $k=10$. We employ GLM-4.5V \cite{glm2024chatglm} for vision-language safety assessment. Maximum episode horizons are 300 steps for the Spatial, Goal, and Object suites, and 550 for the Long suite.
\color{black}

\subsection{Results}

\subsubsection{Performance Analysis}
Table~\ref{tab:main_results} presents the quantitative comparison on SafeLIBERO, with more detailed results provided in the supplementary materials\footnote{\nolinkurl{https://github.com/THU-RCSCT/vlsa-aegis/}\linebreak\nolinkurl{blob/main/SupplementaryMaterials.pdf}}. AEGIS shows significant superiority in both safety and task execution. 

Most notably, AEGIS achieves a fourfold increase in the CAR, reaching {77.9\% / 68.9\%} (translational / full action space) compared to the base $\pi_{0.5}$ (18.7\% / 17.1\%) and OpenVLA-OFT (15.1\% / 5.7\%). Importantly, our results indicate that safety is a prerequisite for success in cluttered environments. Baselines frequently suffer from cascading failures—such as knocking over obstacles that subsequently occlude the target objects, particularly in the Level I Object suite. By strictly preventing these disruptive collisions, AEGIS preserves workspace integrity and achieves the highest TSR of {68.1\% / 67.5\%}, significantly outperforming $\pi_{0.5}$ (50.9\% / 57.8\%). Furthermore, AEGIS records the lowest ETS, proving that the safety layer avoids unnecessary extra movements and prevents the robot from getting stuck on obstacles, which frequently slows down the baselines.

\begin{figure}[htbp] 
    \centering
    
    \begin{subfigure}[b]{0.48\linewidth} 
        \centering
        \includegraphics[width=\linewidth]{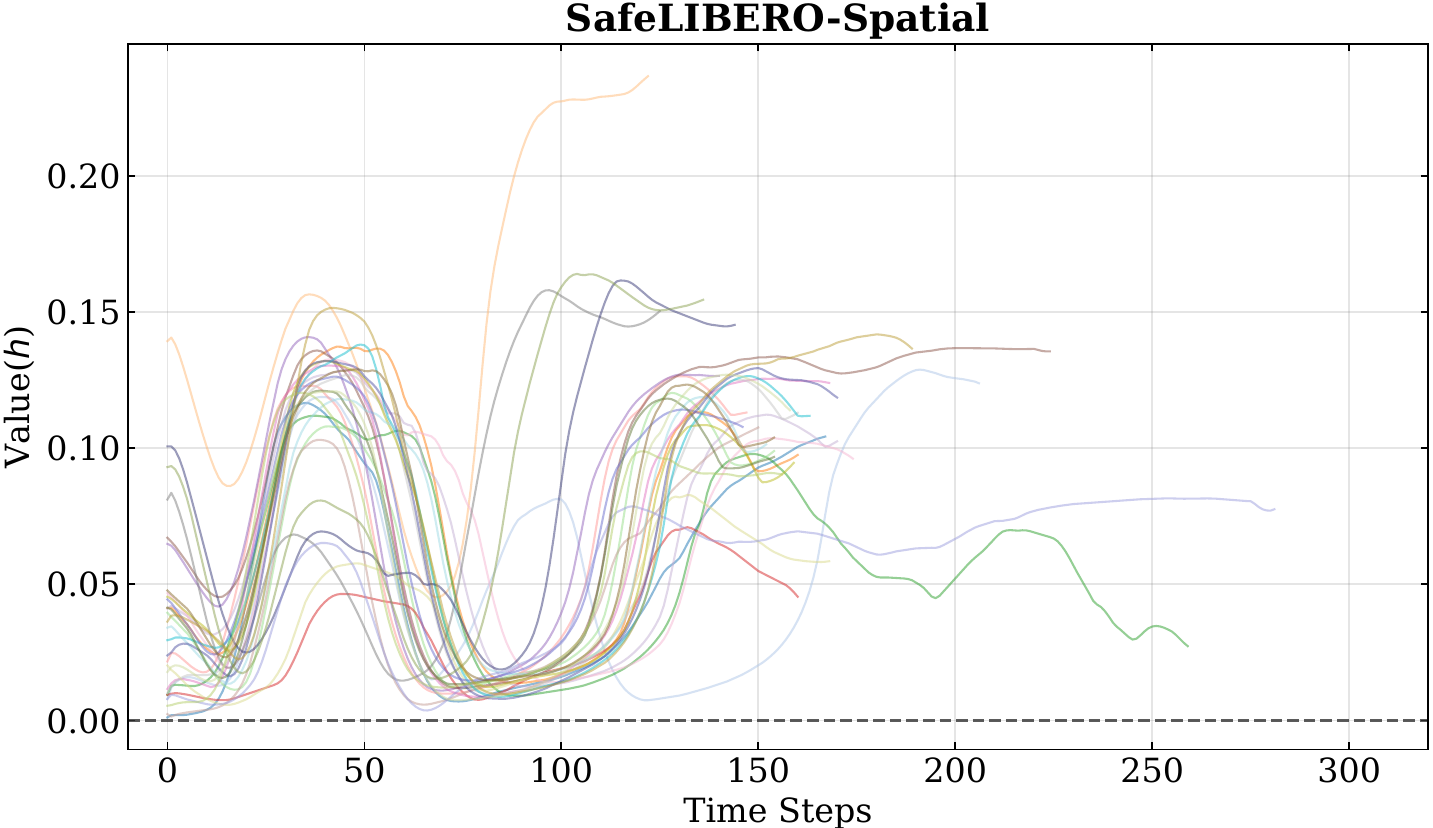}
        \caption{Spatial}
        \label{fig:sub1}
    \end{subfigure}
    \hfill 
    \begin{subfigure}[b]{0.48\linewidth} 
        \centering
        \includegraphics[width=\linewidth]{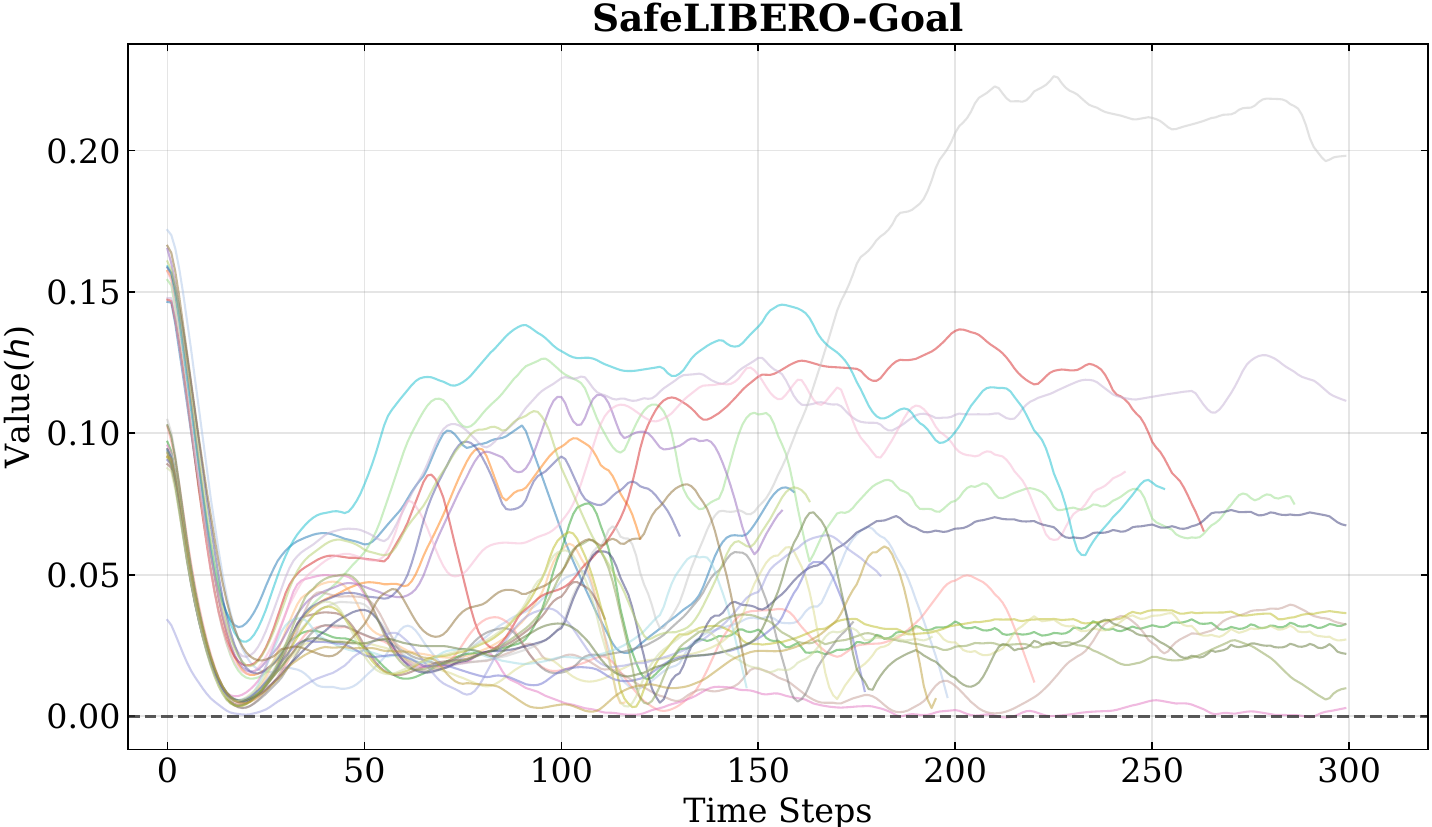}
        \caption{Goal}
        \label{fig:sub2}
    \end{subfigure}
    
    \vspace{1mm} 
    
    \begin{subfigure}[b]{0.48\linewidth}
        \centering
        \includegraphics[width=\linewidth]{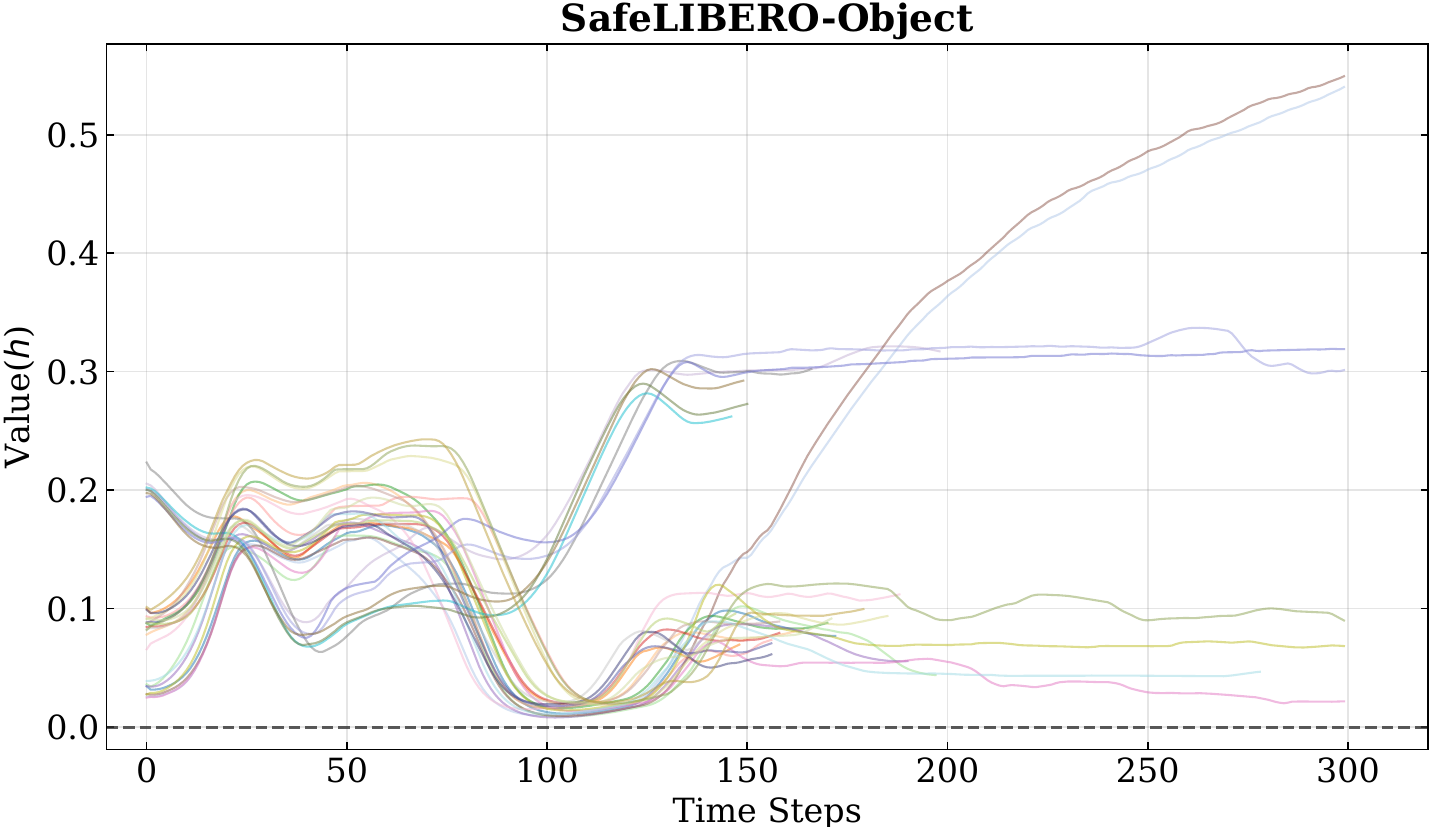}
        \caption{Object}
        \label{fig:sub3}
    \end{subfigure}
    \hfill
    \begin{subfigure}[b]{0.48\linewidth}
        \centering
        \includegraphics[width=\linewidth]{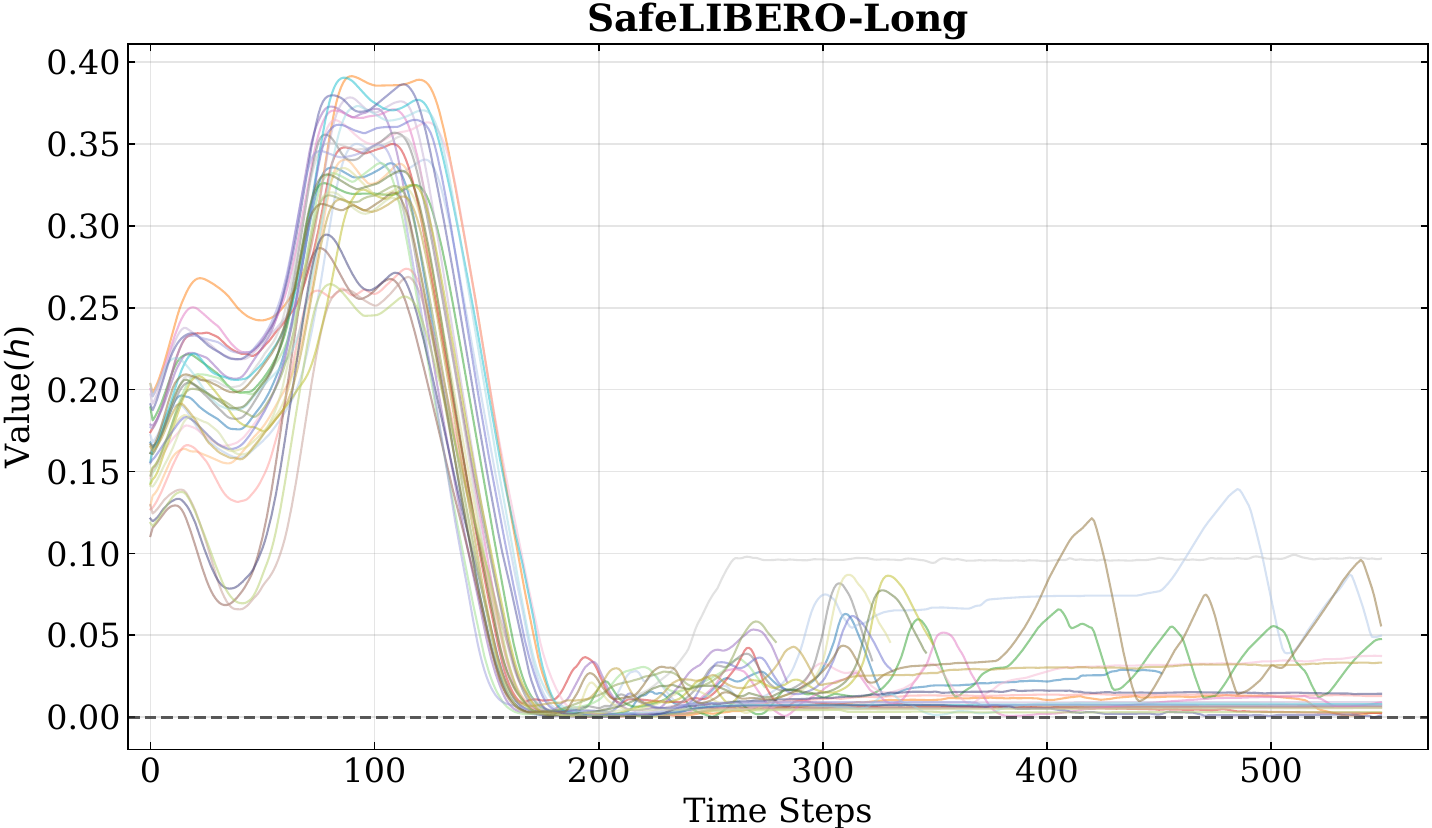}
        \caption{Long}
        \label{fig:sub4}
    \end{subfigure}
    
    \caption{Constraint evolution analysis. Note that in single-column mode, ensure axis labels are legible.}
    \label{fig:cbf_curves}
\end{figure}

\subsubsection{Behavioral and Constraint Analysis}
\begin{figure*}[htbp]
    \centering
    \setlength{\fboxsep}{0pt}    
    \setlength{\fboxrule}{0.5pt} 

    \begin{minipage}[t]{0.49\textwidth} 
        \centering
        {\small \textbf{Task: Put the cup in the bowl.}} 
        \par\vspace{3pt} 
        
        \begin{subfigure}[b]{0.48\linewidth}
            \centering
            \fbox{\includegraphics[width=\linewidth]{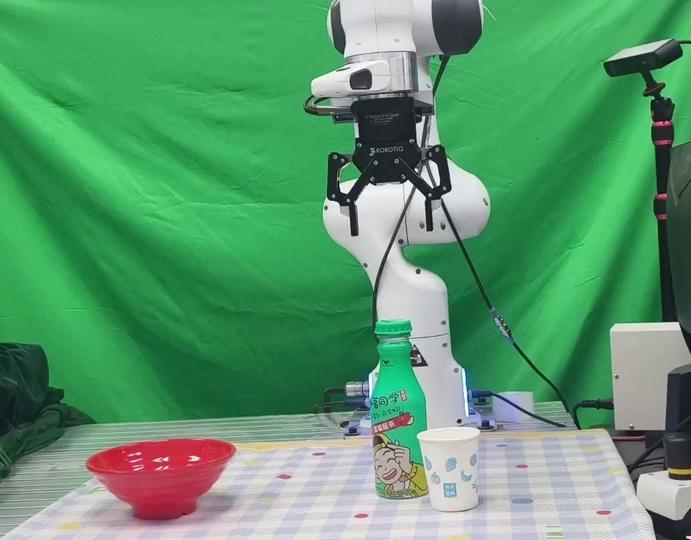}}
            \caption{Level I} \label{fig:cup_i}
        \end{subfigure}
        \hfill
        \begin{subfigure}[b]{0.48\linewidth}
            \centering
            \fbox{\includegraphics[width=\linewidth]{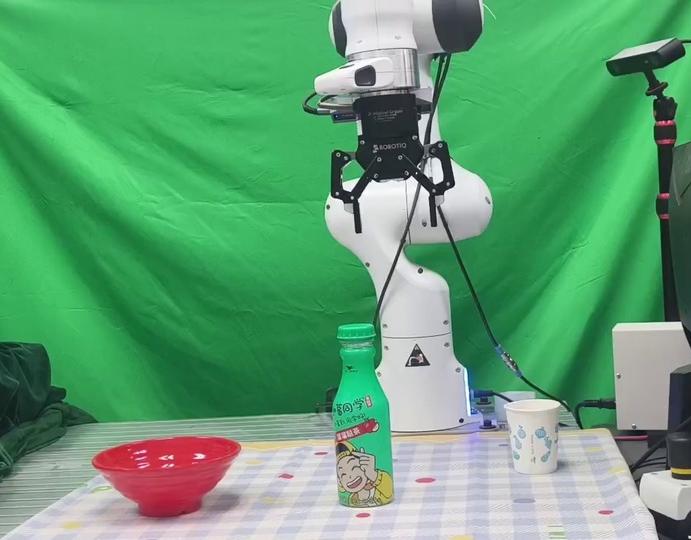}}
            \caption{Level II} \label{fig:cup_ii}
        \end{subfigure}
    \end{minipage}
    \hfill 
    \begin{minipage}[t]{0.49\textwidth}
        \centering
        {\small \textbf{Task: Put the apple in the basket.}}
        \par\vspace{3pt}
        
        \begin{subfigure}[b]{0.48\linewidth}
            \centering
            \fbox{\includegraphics[width=\linewidth]{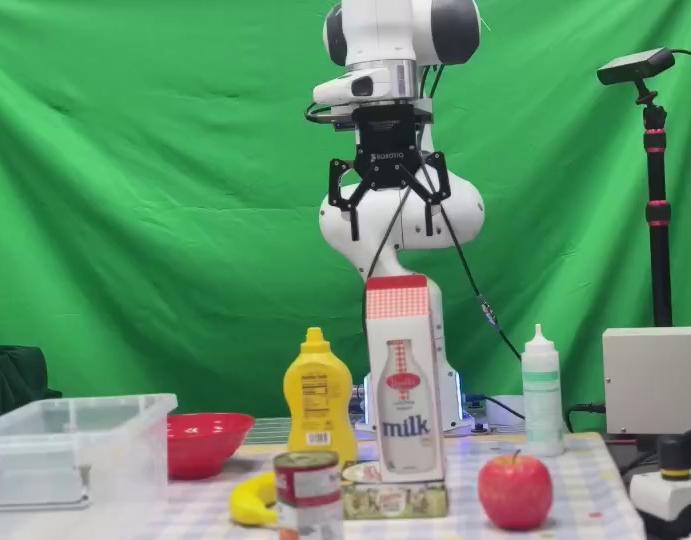}}
            \caption{Level I} \label{fig:apple_i}
        \end{subfigure}
        \hfill
        \begin{subfigure}[b]{0.48\linewidth}
            \centering
            \fbox{\includegraphics[width=\linewidth]{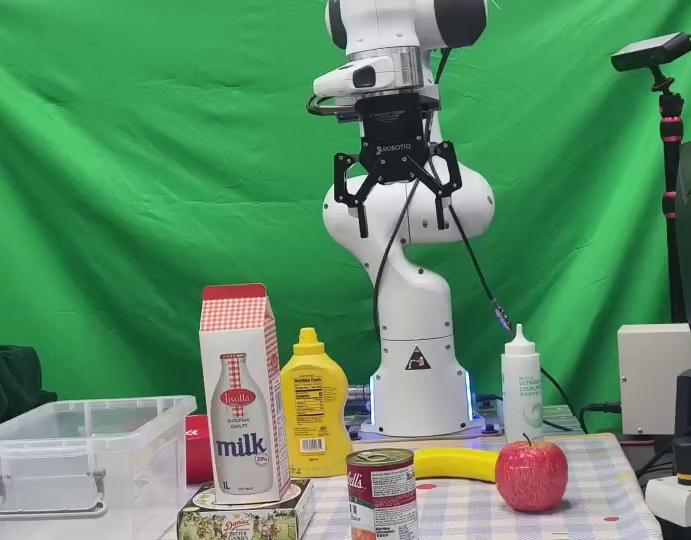}}
            \caption{Level II} \label{fig:apple_ii}
        \end{subfigure}
    \end{minipage}

    \caption{Real-world Experiments. The platform consists of a 7-DoF Franka Emika Panda arm operated in joint velocity control mode, equipped with a Robotiq 2F-85 gripper. Perception is provided by an external ZED 2 stereo camera and a wrist-mounted ZED Mini stereo camera. The VLA policy infers at 15 Hz, while the low-level controller runs at 1 kHz. Experiments encompass two distinct tasks, each evaluated across two levels with varying obstacles.}
    \label{fig:real_world_tasks}
\end{figure*}

\textbf{Qualitative Visualization.} As shown in Fig.~\ref{fig:trajectory_pi05_ours}, baseline policies blindly execute nominal actions, resulting in direct collisions that physically destabilize the environment. In contrast, AEGIS dynamically generates collision-free actions across diverse obstacle geometries while maintaining goal progression. It highlights the effectiveness of our QP formulation, which optimally minimizes deviation from the original VLA action subject to strict safety bounds.

\textbf{Constraint Evolution.} To verify the theoretical validity of our safety layer, Fig.~\ref{fig:cbf_curves} tracks the temporal evolution of the CBF value, $h(\bm{x})$. As the robot approaches an obstacle, $h(\bm{x})$ decreases but consistently remains strictly positive. This empirical evidence confirms that the control solver successfully enforces the forward invariance condition $\dot{h} \geq -\alpha(h)$ at every time step, reliably translating semantic perception into physical safety.

\begin{figure}[htbp]
  \centering
  \includegraphics[width=0.98\linewidth]{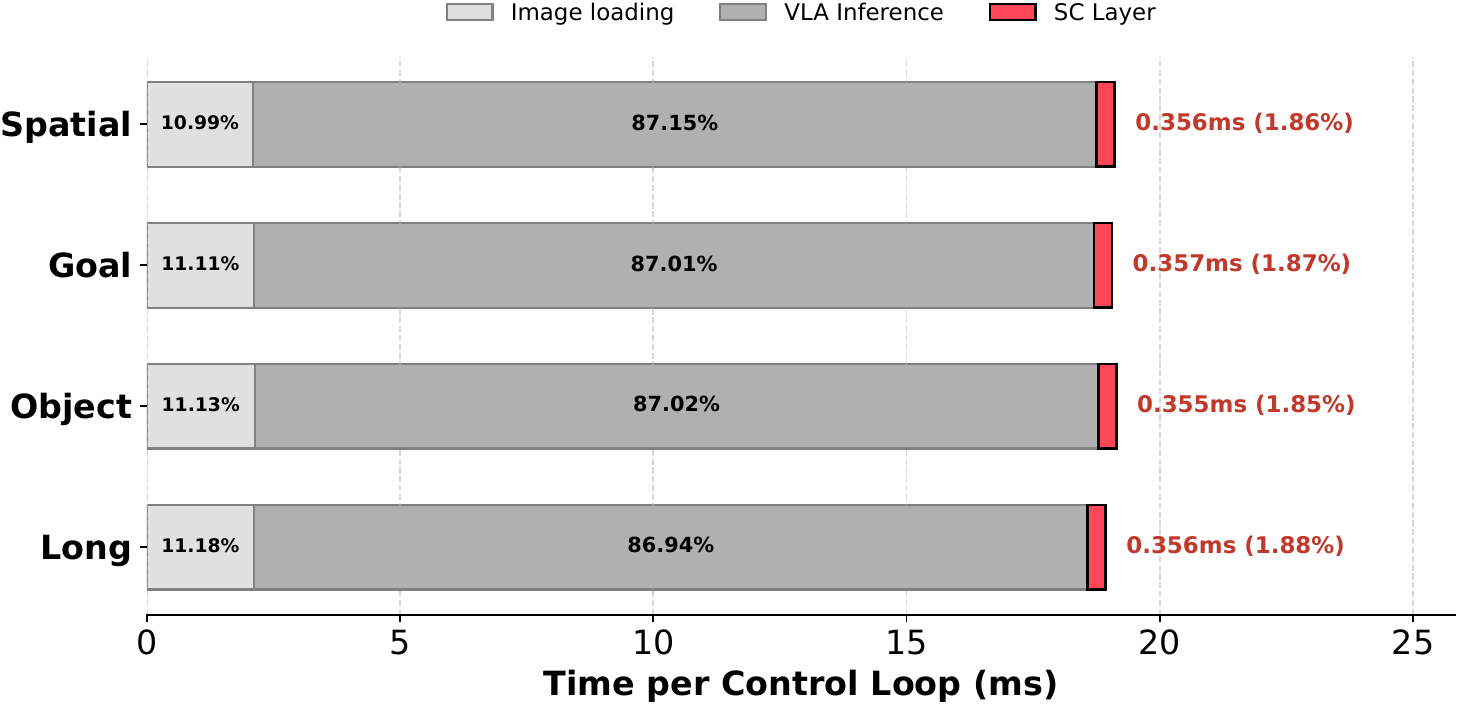}  
  \caption{Time complexity analysis of the proposed approach. SC layer imposes a minimal computational burden ( $<2\%$ of the total cycle time).}
  \label{fig:time_analysis}
\end{figure}
\textbf{Time Complexity.} The underlying SC layer formulates a convex QP with a single linear constraint, effectively acting as a lightweight piecewise intervention that only modifies unsafe nominal actions. As shown in Fig.~\ref{fig:time_analysis}, the SC layer incurs a negligible computation time of merely {0.356 ms} per step on an RTX 4090 GPU setup. This accounts for approximately {1.86\%} of the total cycle latency (roughly 1/47 of the VLA inference time), ensuring the control loop maintains a {20 Hz} real-time execution frequency.

\subsection{Discussion and Limitations}

\textbf{Why is 100\% CAR not Achieved?} 
Ideally, AEGIS guarantees complete collision avoidance for the end-effector. However, residual collisions occur in practice due to limitations in the upstream perception pipeline rather than the control logic. These failures primarily stem from obstacle misidentification, inaccurate spatial grounding, or aggressive point cloud filtering that underestimates the obstacle's geometry. Furthermore, because our current formulation solely constrains the end-effector, unconstrained kinematic links may occasionally collide

\textbf{Safety-Induced Distribution Shift.} 
We observe that although the robot can successfully avoid obstacles, it may subsequently fail to complete the task due to \textit{distribution shift}. Specifically, enforcing safety with AEGIS can drive the system into out-of-distribution states (e.g., higher altitudes rarely seen in the base VLA training data), where the policy may behave erratically and fail to recover toward the goal. Future work should therefore expand training data to better cover these safety-induced out-of-distribution regions.

\color{black}




\section{Real-World Experiments}
\label{sec:real_world_experiments}

To validate the real-world applicability and robustness of our proposed framework, we construct a physical robot platform as shown in Fig.~\ref{fig:real_world_tasks}. We utilize $\pi_{0.5}$-DROID as our base VLA policy. Since our hardware configuration is consistent with that used for DROID dataset collection \cite{khazatsky2024droid}, we deploy $\pi_{0.5}$-DROID in a zero-shot manner without fine-tuning. To align with the simulation studies, we set up two tasks, each comprising two safety levels, as illustrated in Fig.~\ref{fig:real_world_tasks}. In all tasks, the robot is required to accomplish the assigned objective while avoiding collisions.

A comparative evaluation with the $\pi_{0.5}$-DROID policy is conducted, and the results are presented in the supplementary video. In these scenarios, the $\pi_{0.5}$-DROID policy directly collides with the predefined obstacles. In contrast, the proposed method successfully identifies obstacles and leverages point clouds acquired from the external camera to construct CBFs. The nominal actions are subsequently modified to ensure safety, yielding performance that is consistent with the results observed in simulation.
\section{Conclusion}
In this work, a novel AEGIS approach, following VLSA architecture, was designed to bridge the gap between semantic instruction following and physical safety in robotic manipulation. By introducing a plug-and-play SC layer formulated via CBFs, our approach enables existing VLA models to enforce strict safety boundaries with theoretical guarantees without compromising their original task capabilities. We validated our method on the constructed SafeLIBERO benchmark, which covers 32 distinct scenarios with varying spatial complexities. In addition, similar experiments were conducted on a real robotic platform to assess practical applicability. Our extensive experiments demonstrate that VLSA significantly outperforms state-of-the-art baselines. 
It is noteworthy that highly dynamic obstacles and complex whole-body collision avoidance substantially impose stricter safety requirements. Addressing these challenges and extending VLSA to achieve safer and more efficient performance constitute important and compelling directions for future research, and remain
part of our ongoing work.

\section*{ACKNOWLEDGMENT}

The authors thank Xingyu Liu, Yongyi Jia, and Junjie Ding from Tsinghua University for their helpful discussions and support. The authors also acknowledge the use of large language models (e.g., Gemini) for assisting in visualizing concepts and improving the clarity of the manuscript.

\bibliographystyle{IEEEtran}  
\bibliography{reference}       

\end{document}